\documentclass{article}
\usepackage[utf8]{inputenc}
\usepackage{textgreek}
\usepackage{microtype}
\usepackage{graphicx}
\usepackage{subcaption}
\usepackage{booktabs} % for professional tables
\usepackage{algorithm}
\usepackage{algorithmic}
\usepackage{amsmath}  
\usepackage{hyperref}
\usepackage{multirow} 
\usepackage{tikz}
\usepackage{multicol}
\usepackage{tikz}
\usepackage{amsthm}
\usepackage{amssymb}
\usepackage{booktabs}
\usepackage{rotating}
\usepackage{tabularx}
\usepackage{rotating}
\usepackage{multirow}
\usepackage{color}
\usepackage[table]{xcolor}

\usepackage[accepted]{icml2026}
\usepackage{xcolor}
\usepackage{amsmath}
\usepackage{amssymb}
\usepackage{mathtools}
\usepackage{amsthm}
\usepackage{array}
\usepackage{caption}
\usepackage{subcaption}
\usepackage[capitalize,noabbrev]{cleveref}

\theoremstyle{plain}
\newtheorem{theorem}{Theorem}[section]
\newtheorem{proposition}[theorem]{Proposition}
\newtheorem{lemma}[theorem]{Lemma}
\newtheorem{corollary}[theorem]{Corollary}
\theoremstyle{definition}
\newtheorem{definition}[theorem]{Definition}
\newtheorem{assumption}[theorem]{Assumption}
\theoremstyle{remark}
\newtheorem{remark}[theorem]{Remark}

\usepackage[textsize=tiny]{todonotes}

\icmltitlerunning{Singularity-aware Optimization via Randomized Geometric Probing: Towards Stable Non-smooth Optimization}

\begin{document}

\twocolumn[
  \icmltitle{Singularity-aware Optimization via Randomized Geometric Probing: Towards Stable Non-smooth Optimization}

  % It is OKAY to include author information, even for blind submissions: the
  % style file will automatically remove it for you unless you've provided
  % the [accepted] option to the icml2026 package.

  % List of affiliations: The first argument should be a (short) identifier you
  % will use later to specify author affiliations Academic affiliations
  % should list Department, University, City, Region, Country Industry
  % affiliations should list Company, City, Region, Country

  % You can specify symbols, otherwise they are numbered in order. Ideally, you
  % should not use this facility. Affiliations will be numbered in order of
  % appearance and this is the preferred way.
  \icmlsetsymbol{equal}{*}

  \begin{icmlauthorlist}
    \icmlauthor{Ruoran Xu}{yyy}
    \icmlauthor{Borong She}{yyy}
    %\icmlauthor{}{sch}
    \icmlauthor{Xiaobo Jin}{yyy}
    \icmlauthor{Qiufeng Wang}{yyy}
    %\icmlauthor{}{sch}
    %\icmlauthor{}{sch}
  \end{icmlauthorlist}

    \icmlaffiliation{yyy}{Xi'an Jiaotong-Liverpool University}
  %\icmlaffiliation{yyy}{Department of XXX, University of YYY, Location, Country}
  %\icmlaffiliation{comp}{Company Name, Location, Country}
  %\icmlaffiliation{sch}{School of ZZZ, Institute of WWW, Location, Country}

  \icmlcorrespondingauthor{Qiufeng Wang}{qiufeng.wang@xjtlu.edu.cn}
  \icmlcorrespondingauthor{Xiaobo Jin}{xiaobo.jin@xjtlu.edu.cn}

  % You may provide any keywords that you find helpful for describing your
  % paper; these are used to populate the "keywords" metadata in the PDF but
  % will not be shown in the document
  \icmlkeywords{Machine Learning, ICML}

  \vskip 0.3in
]

% this must go after the closing bracket ] following \twocolumn[ ...

% This command actually creates the footnote in the first column listing the
% affiliations and the copyright notice. The command takes one argument, which
% is text to display at the start of the footnote. The \icmlEqualContribution
% command is standard text for equal contribution. Remove it (just {}) if you
% do not need this facility.

% Use ONE of the following lines. DO NOT remove the command.
% If you have no special notice, KEEP empty braces:
\printAffiliationsAndNotice{}  % no special notice (required even if empty)
% Or, if applicable, use the standard equal contribution text:
% \printAffiliationsAndNotice{\icmlEqualContribution}

% Modern neural architectures inherently violate the Lipschitz continuous gradients assumption, containing non-smooth singularities (e.g., ReLUs, quantization) where the gradient is undefined. In these regimes, standard adaptive optimizers suffer from gradient chattering—pathological oscillations caused by conflicting signals within the Clarke subdifferential. To address this, we propose Singularity-aware Adam (S-Adam), an optimizer that stabilizes trajectories near critical points using a novel Local Geometric Instability (LGI) metric. Estimated via the variance of randomized directional derivatives, LGI serves as a computationally efficient proxy for the subdifferential set diameter, enabling dynamic step-size modulation without Hessian computation. We theoretically prove that S-Adam converges to $(\delta, \epsilon)$-Clarke stationary points at the optimal rate of $\mathcal{O}(1/\sqrt{T})$. Experiments on Quantization-Aware Training (QAT) and low batch size demonstrate that S-Adam significantly mitigates chattering and achieves superior generalization over state-of-the-art baselines.

\begin{abstract}
Deep learning optimization relies heavily on the assumption of smooth loss landscapes, a condition systematically violated by modern architectures due to non-smooth components like ReLU activations and quantization operators. In such non-smooth regimes, adaptive optimizers such as Adam suffer from \textit{gradient chattering}—violent oscillations caused by conflicting signals within the Clarke subdifferential—leading to poor convergence and suboptimal generalization. To address this, we introduce \textbf{Singularity-aware Adam (S-Adam)}, a novel optimizer that stabilizes training by dynamically modulating step sizes based on local geometric instability. Our key contribution is the \textbf{Local Geometric Instability (LGI)} metric, a computationally efficient estimator of the Clarke subdifferential diameter derived from the variance of randomized directional derivatives. S-Adam incorporates an adaptive damping mechanism $\exp(-\lambda \rho_t)$ that decelerates updates in high-instability regions while preserving fast convergence in smooth basins. We provide a rigorous convergence analysis using differential inclusions, proving that S-Adam converges almost surely to $(\delta, \epsilon)$-Clarke stationary points at the optimal $\mathcal{O}(1/\sqrt{T})$ rate. Empirical evaluations on Quantization-Aware Training (QAT) and high-noise small-batch learning demonstrate that S-Adam consistently outperforms AdamW and Prox-SGD, achieving accuracy gains of up to +4.54\% on CIFAR-100 and +4.27\% on TinyImageNet while effectively mitigating gradient oscillations.
\end{abstract}

\section{Introduction}
The theoretical underpinnings of deep learning optimization are predominantly built upon the assumption of Lipschitz-continuous gradients, which guarantees stable descent and convergence. However, this assumption is fundamentally incompatible with the architectural realities of modern neural networks, which ubiquitously incorporate non-smooth elements such as ReLU activations, quantization functions in Quantization-Aware Training (QAT) \cite{8578384}, and sparsity-inducing regularizers. While these functions are locally Lipschitz almost everywhere, they introduce sharp singularities where gradients are undefined, rendering classical smooth optimization theory inadequate.

\begin{figure}[t]
    \centering
    % -------- First Row --------
    \begin{subfigure}[t]{0.48\linewidth}
        \centering
        \includegraphics[width=\linewidth]{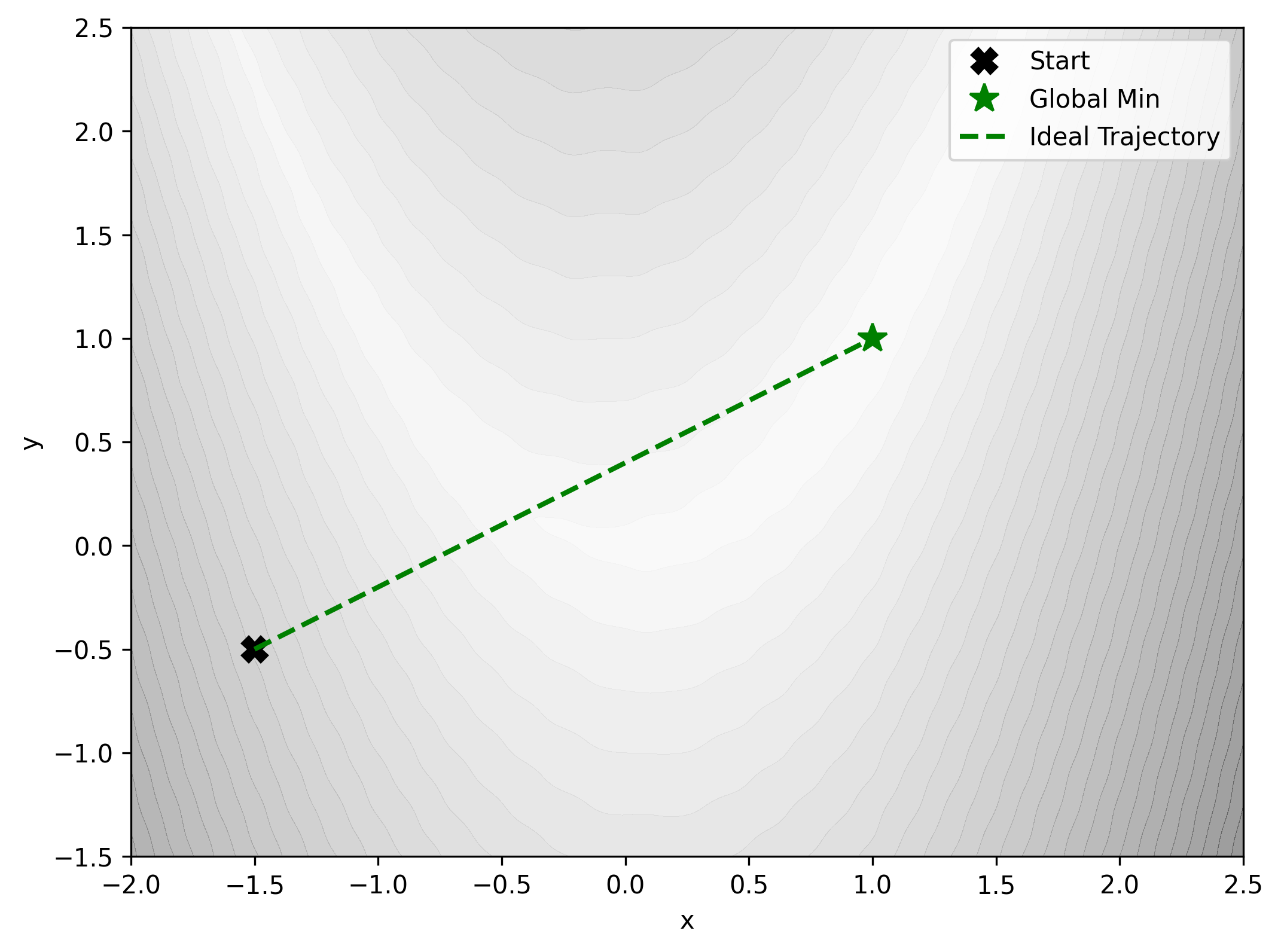}
        \caption{Global minimum point and ideal trajectory}
        \label{fig:adamw_prox}
    \end{subfigure}\hfill
    \begin{subfigure}[t]{0.48\linewidth}
        \centering
        \includegraphics[width=\linewidth]{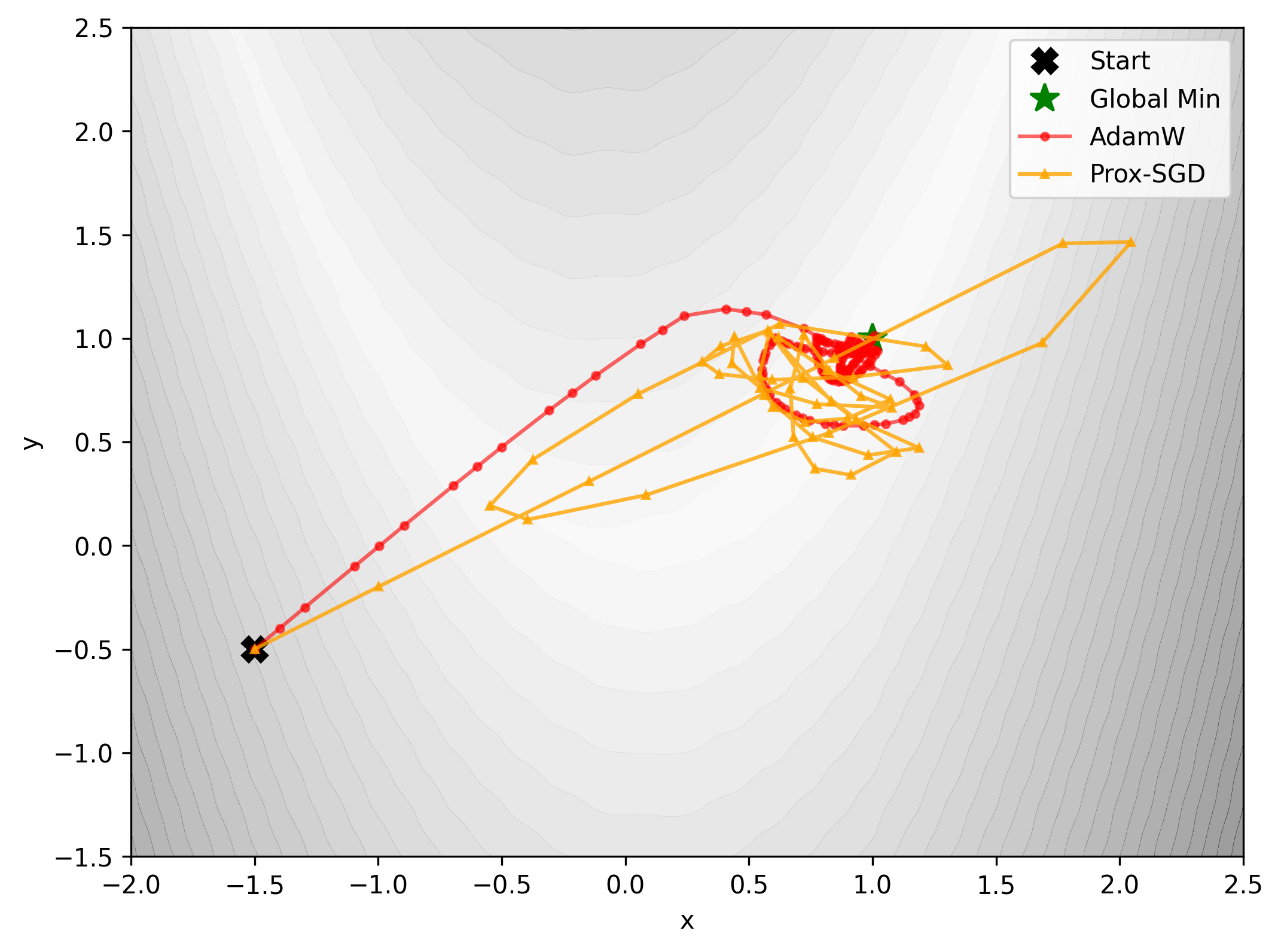}
        \caption{Comparison of Adam and Prox-SGD trajectories}
        \label{fig:all_trajectories}
    \end{subfigure}

    \vspace{0.4em}

    % -------- Second Row --------
    \begin{subfigure}[t]{0.48\linewidth}
        \centering
        \includegraphics[width=\linewidth]{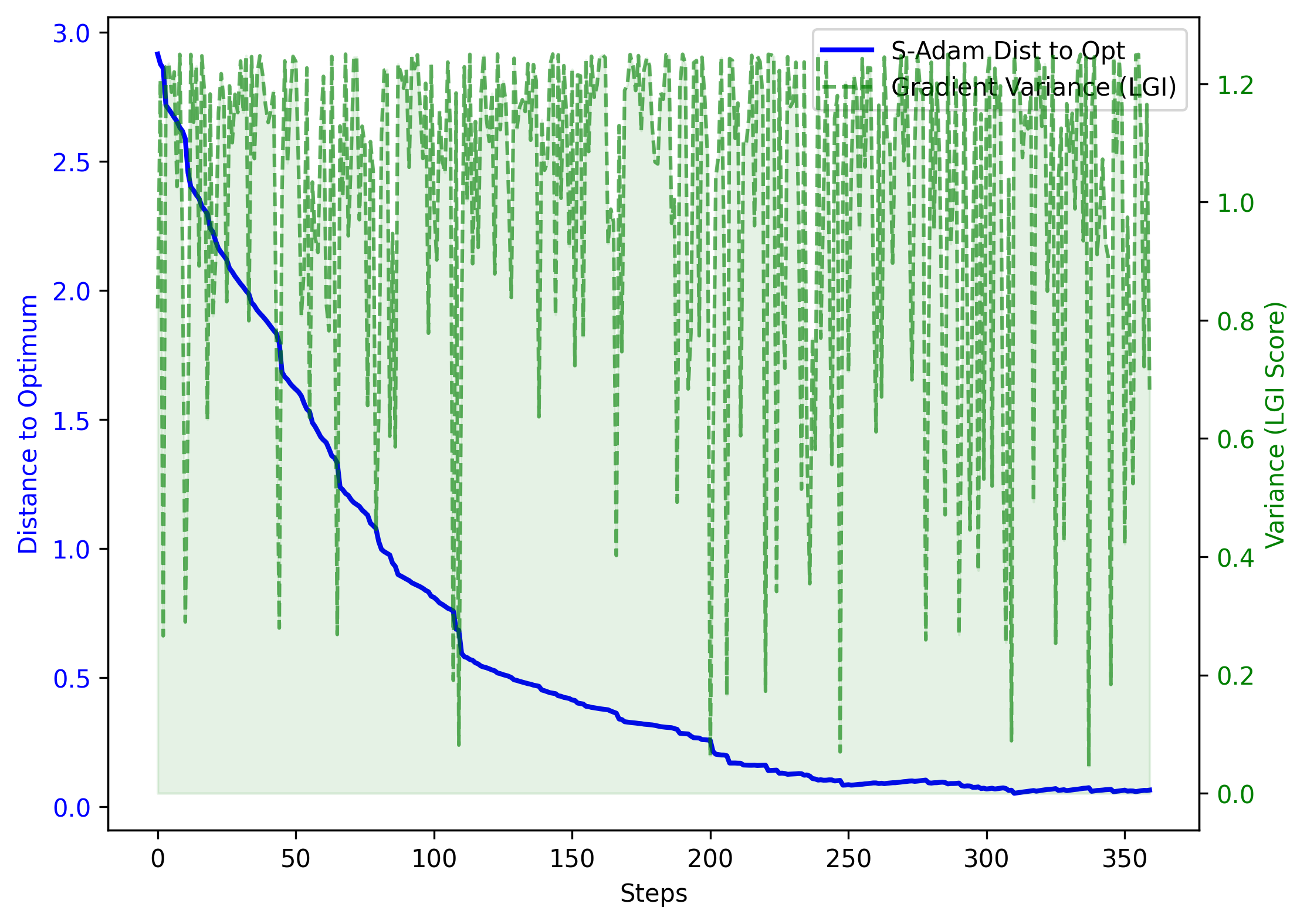}
        \caption{S-Adam: LGI-triggered damping}
        \label{fig:ideal_trajectory}
    \end{subfigure}\hfill
    \begin{subfigure}[t]{0.48\linewidth}
        \centering
        \includegraphics[width=\linewidth]{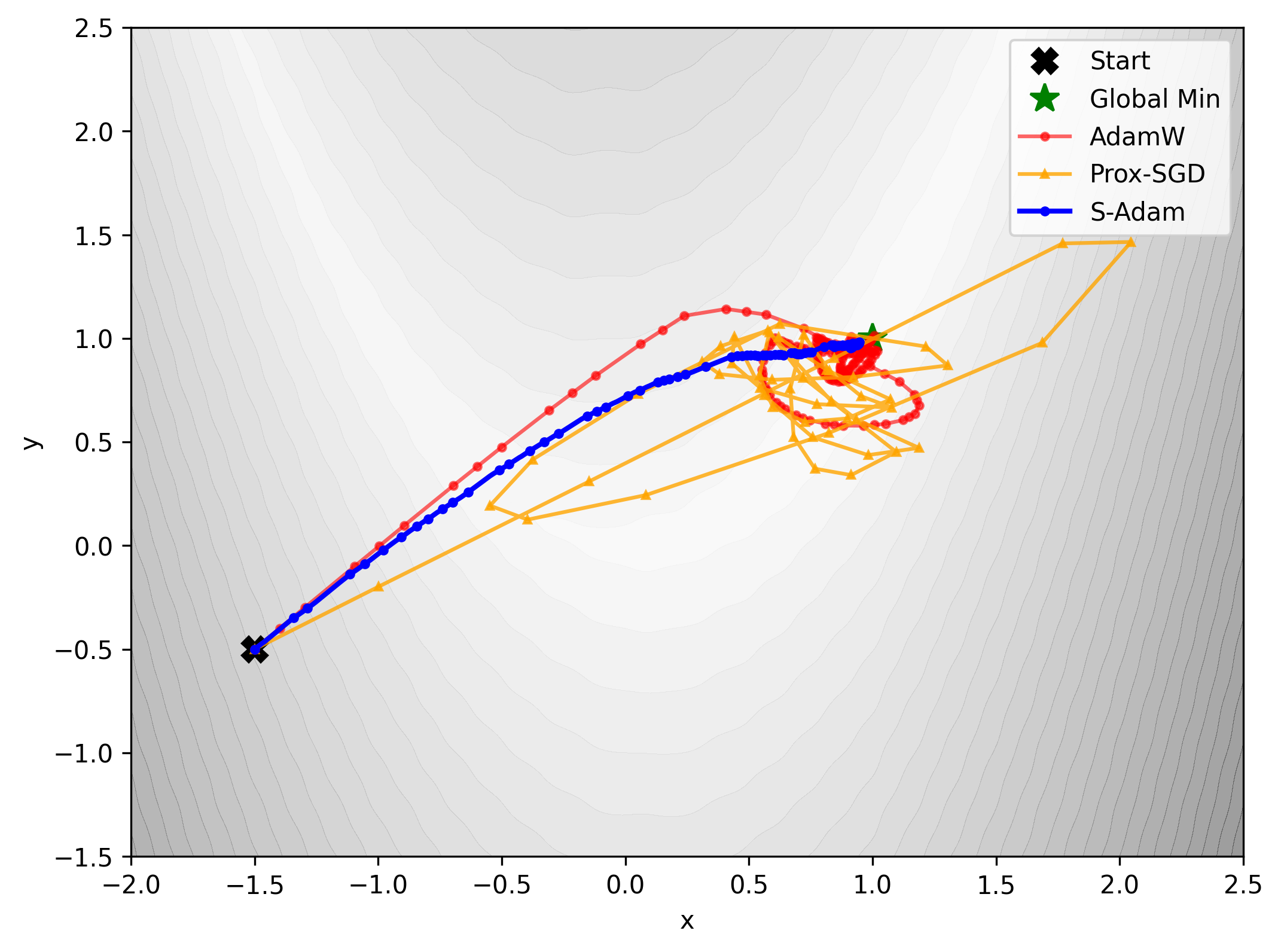}
        \caption{Stabilized convergence of S-Adam}
        \label{fig:sadam_variance}
    \end{subfigure}

    \caption{Geometric instability visualization on synthetic non-smooth landscape of $f(x,y) = |x-1| + |y-1| + 0.5(x^2 + y^2)$ }
    \label{fig:optimization_plots}
\end{figure}

\begin{figure}
    \centering
    \includegraphics[width=1\linewidth]{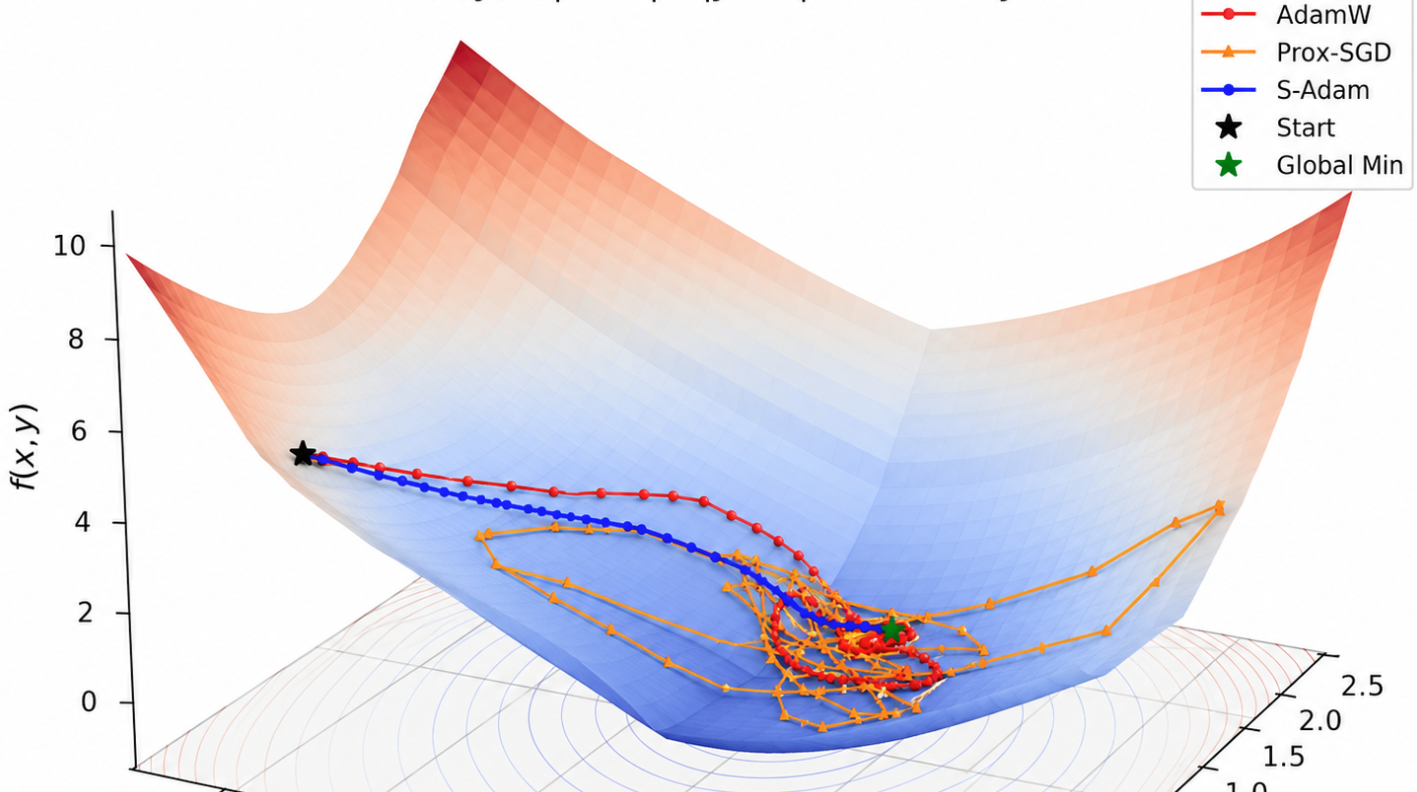}
    \caption{Figure~\ref{fig:optimization_plots}(d) on synthetic non-smooth landscape}
    \label{fig:figure1}
\end{figure}

At such non-smooth points, the local geometry is characterized not by a single gradient vector but by the \textbf{Clarke subdifferential} $\partial_C f(x)$—a convex set encompassing all possible limiting gradients \cite{doi:10.1137/1.9781611971309}. When the diameter of this set is large, momentum-based optimizers like Adam \cite{Kingma2014AdamAM} accumulate conflicting directional signals, leading to \textbf{gradient chattering}: pathological oscillations that prevent convergence into sharp, low-loss basins and degrade generalization performance.

Existing remedies fall short in practice. Proximal methods, while theoretically sound, require explicit proximal mappings that are intractable for nested non-smooth functions like those in QNNs (Quantized Neural Networks). Randomized smoothing techniques, conversely, introduce prohibitive variance when used as gradient surrogates in large-scale training. Crucially, both approaches lack a mechanism to dynamically adapt to local geometric instability without compromising computational efficiency.

In this work, we propose a paradigm shift: instead of smoothing the objective, we employ randomized smoothing as a \textbf{geometric probe} to estimate the local instability of the loss landscape. This insight leads to our main contribution, \textbf{Singularity-aware Adam (S-Adam)} (In Fig.~\ref{fig:optimization_plots} and Fig.~\ref{fig:figure1}), which introduces:
\begin{itemize}
    \item The \textbf{Local Geometric Instability (LGI)} metric, a variance-based estimator of the Clarke subdifferential diameter that requires no Hessian computation;
    \item An \textbf{adaptive geometric brake} $\exp(-\lambda \rho_t)$ that modulates step sizes in real-time, dampening updates near singularities while maintaining rapid progress in smooth regions;
    \item A \textbf{rigorous convergence guarantee} to Clarke stationary points with an optimal $\mathcal{O}(1/\sqrt{T})$ rate, established via differential inclusion analysis;
    \item \textbf{Empirical superiority} in extreme non-smooth settings, including low-bit QAT and high-noise small-batch learning, where S-Adam achieves significant accuracy improvements over AdamW and Prox-SGD.
\end{itemize}

Our work bridges the gap between non-smooth optimization theory and practical deep learning scalability, offering a drop-in replacement for Adam that is both theoretically grounded and empirically robust.

\section{Related Work}

Our work sits at the intersection of non-smooth optimization, adaptive gradient methods, and geometric-aware learning. We contextualize S-Adam within three key strands of literature.

\subsection{Non-smooth Optimization and Proximal Methods}

Classical non-smooth optimization relies on subgradient methods \cite{shor2012minimization}, which converge weakly to critical points at sublinear rates. Proximal algorithms \cite{parikh2014proximal, Yang2020ProxSGDTS} offer stronger guarantees by solving implicit subproblems via the proximal operator. However, these methods require the non-smooth term to be \textit{proximable}—admitting a closed-form or efficiently computable proximal mapping. This assumption fails in modern deep learning, where non-smoothness arises from \textit{nested compositions} of rounding functions (quantization), ReLUs, and sparsity-inducing penalties. The lack of tractable proximal mappings for such composite operators renders standard proximal methods computationally prohibitive or inapplicable \cite{davis2017stochastic}.

\subsection{Smoothing Techniques and Gradient Estimation}

Smoothing methods approximate non-smooth objectives with differentiable surrogates. Nesterov smoothing \cite{Nesterov2005SmoothMO} convolves the objective with a strongly convex regularizer, while randomized smoothing \cite{Duchi2011RandomizedSF} uses expectation over random perturbations. Although effective in theory, using the gradient of the smoothed function introduces estimation bias and high variance that degrades performance in high-precision training \cite{Bubeck2019ConvexAO}. More critically, these approaches \textit{replace} the original gradient, altering the optimization dynamics and potentially converging to different minima. S-Adam takes a fundamentally different stance: we employ randomized smoothing not as a gradient surrogate, but as a \textit{diagnostic probe} to estimate local geometric instability while preserving the original gradient direction.

\subsection{Adaptive Optimizers and Stability}

Adaptive gradient methods like Adam \cite{Kingma2014AdamAM} and its variants (AdamW \cite{Loshchilov2017DecoupledWD}, AdaBound \cite{Luo2019AdaptiveGM}) scale learning rates using exponential moving averages of past gradients. These methods implicitly assume \textit{local smoothness} to trust their second-moment estimators $\hat{v}_t$. At non-smooth singularities, instantaneous gradient jumps violate this assumption, causing $\hat{v}_t$ to lag behind the rapidly changing geometry—a mismatch that triggers gradient chattering. Recent attempts to stabilize adaptive optimizers include gradient clipping \cite{Pascanu2013GradientClipping}, learning rate warmup \cite{gotmare2019closer}, and variance reduction \cite{defazio2014saga}. However, these heuristics lack a principled mechanism to distinguish between stochastic noise and genuine topological singularities.

\subsection{Geometry-Aware Minimization}

Sharpness-Aware Minimization (SAM) \cite{foret2021sharpnessawareminimizationefficientlyimproving} seeks flat minima by minimizing the worst-case loss within a neighborhood. While effective for generalization, SAM relies on a first-order Taylor expansion to approximate the perturbation direction—an approximation that is theoretically invalid at non-differentiable points where the gradient is undefined or discontinuous. Consequently, SAM struggles to characterize sharpness accurately in quantized or sparsely regularized landscapes. Subsequent variants like ASAM \cite{Kwon2021ASAM} attempt to address scale invariance but retain the fundamental limitation at singularities. Other geometry-aware approaches include trust-region methods \cite{conn2000trust} and second-order preconditioning \cite{Anil2019SecondOrderOptimizer}, but these typically require explicit Hessian information or are limited to convex settings.

\subsection{Positioning of S-Adam}

S-Adam distinguishes itself by: (1) providing a \textit{computationally lightweight} geometric probe (LGI) that requires no Hessian computation, (2) maintaining the original gradient direction while adaptively modulating step sizes, (3) offering rigorous convergence guarantees to Clarke stationary points in non-smooth regimes, and (4) seamlessly degenerating to standard Adam in smooth regions. Unlike proximal methods, S-Adam handles nested non-smoothness; unlike smoothing techniques, it preserves gradient fidelity; and unlike SAM, it remains valid at singularities.

\section{Theoretical Foundation: From Clarke Geometry to Geometric Instability}
\label{sec:theory}
\begin{figure}[htbp]
    \centering
    \includegraphics[width=0.7\linewidth]{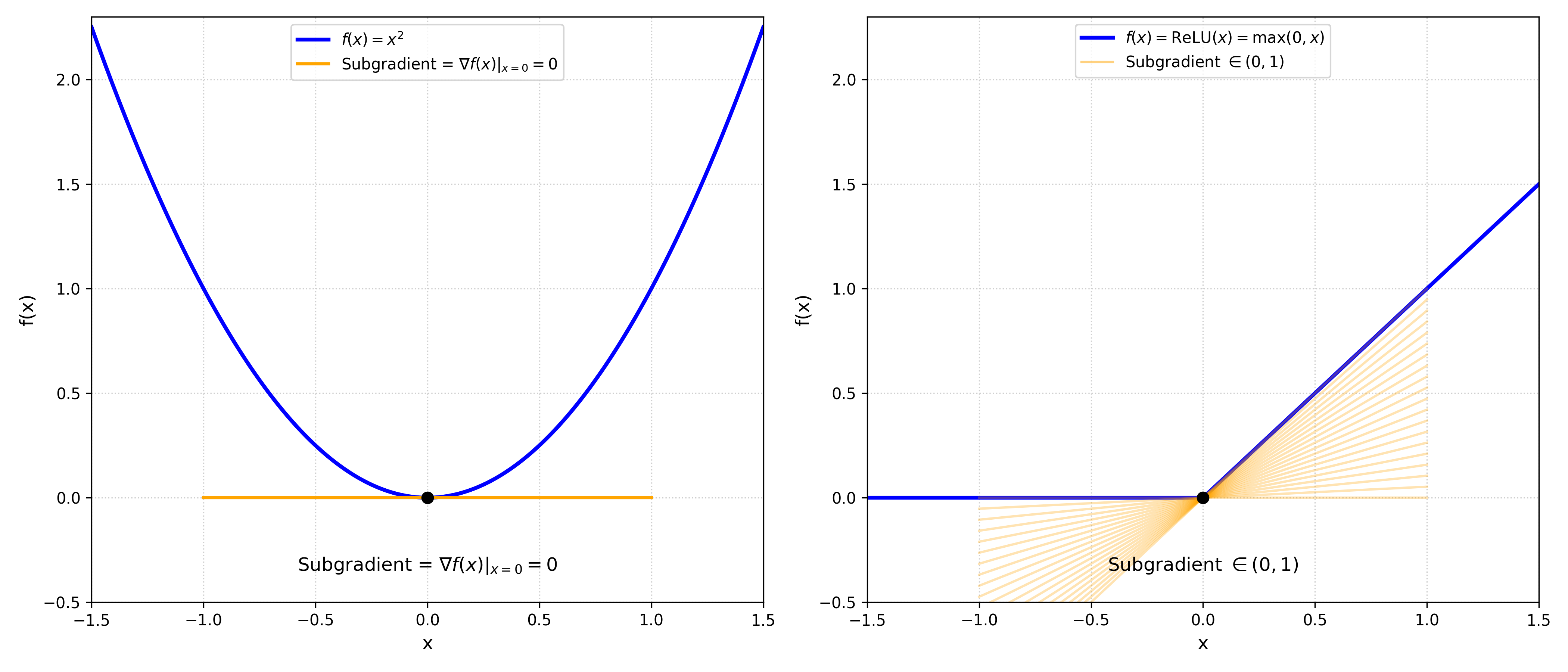}
    \caption{Smooth Function(Left) \& Non-smooth Function(Right)}
    \label{fig:clarke_subgradient}
\end{figure}
Modern neural architectures violate the Lipschitz-smooth assumption at numerous points: ReLU activations induce kinks, quantization operators create step discontinuities, and sparsity regularizers introduce $\ell_1$-type non-differentiabilities. As visualized in Figure \ref{fig:clarke_subgradient}, to analyze optimization in such landscapes, we adopt the framework of \textbf{Clarke nonsmooth analysis} ~\cite{doi:10.1137/1.9781611971309}, which provides the correct geometric language for non-smooth critical points.

\subsection{Clarke Subdifferential and Its Geometry}

For a locally Lipschitz function $f: \mathbb{R}^d \to \mathbb{R}$, the gradient $\nabla f(x)$ exists almost everywhere by Rademacher's theorem. At non-smooth points, the local geometry is characterized by the \textbf{Clarke subdifferential}:

\begin{definition}[Clarke Subdifferential]
\label{def:clarke_subdifferential}
The Clarke subdifferential of $f$ at $x$ is the convex hull of all limiting gradients:
\begin{equation}
    \partial_C f(x) := \mathrm{conv}\left\{\lim_{k\to\infty} \nabla f(x_k) : x_k \to x, \, x_k \notin \Omega_f\right\},
\end{equation}
where $\Omega_f$ is the set (of measure zero) where $f$ is non-differentiable.
\end{definition}

The \textbf{diameter} of this set, $\mathrm{diam}(\partial_C f(x)) = \sup_{g_1,g_2\in\partial_C f(x)} \|g_1 - g_2\|$, quantifies the severity of non-smoothness. A large diameter indicates conflicting descent directions within the subdifferential cone—precisely the geometric origin of \textit{gradient chattering} in momentum-based optimizers.

% \subsection{Local Geometric Instability (LGI) as a Computable Proxy}
% \label{subsec:lgi_motivation}

% Direct computation of $\mathrm{diam}(\partial_C f(x))$ is intractable in high dimensions. Our key insight is that this geometric quantity can be estimated via the \textbf{variance of randomized directional derivatives}. 

% Let $u \sim \mathcal{U}(\mathbb{S}^{d-1})$ be a uniformly random unit direction. The finite-difference approximation of the directional derivative is:
% \begin{equation}
% f'(x; u) \approx \frac{f(x + \delta u) - f(x)}{\delta}, \quad \delta > 0.
% \end{equation}

% The variance of $f'(x; u)$ across random directions encodes geometric information:

% \begin{proposition}[LGI as Subdifferential Diameter Proxy]
% \label{prop:lgi_proxy}
% For a locally Lipschitz function $f$, there exist constants $c_1, c_2 > 0$ such that:
% \begin{equation}
%    c_1 \cdot \mathrm{Var}_{u}[f'(x; u)] \leq \mathrm{diam}(\partial_C f(x))^2 \leq c_2 \cdot \mathrm{Var}_{u}[f'(x; u)].
% \end{equation}

% \end{proposition}

% \begin{proof}[Proof sketch]
% The upper bound follows from Popoviciu's inequality applied to the bounded set $\partial_C f(x)$. The lower bound constructs an extremal direction aligned with the diameter vector. Complete proof in Appendix~\ref{app:lemma_proxy}.
% \end{proof}

% \textbf{Intuition}: In smooth regions, $\partial_C f(x) \approx \{\nabla f(x)\}$, so directional derivatives vary little ($\mathrm{Var} \approx 0$). At sharp singularities, conflicting subgradients cause large directional variations ($\mathrm{Var} \gg 0$).

\subsection{Dual Interpretation: LGI as Relative Curvature}
\label{subsec:lgi_curvature}

An alternative interpretation emerges from the perspective of \textbf{stochastic smoothing}. Consider the $\delta$-smoothed surrogate:
\begin{equation}
f_\delta(x) = \mathbb{E}_{u\sim\mathcal{U}(\mathbb{S}^{d-1})}[f(x + \delta u)].
\end{equation}

Even if $f$ is non-smooth, $f_\delta$ is infinitely differentiable. As shown in Appendix~\ref{app:stochastic-smoothing}, the LGI metric approximates:

\begin{equation}
\label{eq:lgi_curvature}
\rho(x) \approx \frac{1 + \frac{1}{2d}\kappa_\delta(x)^2}{1 + \frac{1}{2d}\kappa_\delta(x)^2 + \epsilon'}, \quad 
\kappa_\delta(x) = \frac{\delta \|\nabla^2 f_\delta(x)\|_F}{\|\nabla f_\delta(x)\|},
\end{equation}
where $\kappa_\delta(x)$ is the \textbf{relative curvature} of the smoothed landscape. This reveals LGI's dual nature: it measures both subdifferential diameter (non-smooth view) and relative curvature (smoothed view).

\section{Methodology: S-Adam Algorithm}
\label{sec:methodology}

Building on the geometric insights of Section~\ref{sec:theory}, we now present \textbf{Singularity-aware Adam (S-Adam)}, which dynamically modulates step sizes based on local geometric instability.

\subsection{LGI Metric Construction}
\label{subsec:lgi_construction}

We define the \textbf{Local Geometric Instability (LGI)} score as:

\begin{equation}
\label{eq:lgi_formula}
\rho_t(x_t) = \frac{\mathrm{Var}_{u_i \sim \{u\}_k} [D_i]}{\mathbb{E}[D_i^2] + \epsilon}, \quad 
D_i = \frac{f(x_t + \delta u_i) - f(x_t)}{\delta},
\end{equation}

where $k$ random directions $\{u_1, \dots, u_k\}$ are sampled i.i.d. from $\mathcal{U}(\mathbb{S}^{d-1})$, and $\epsilon > 0$ ensures numerical stability. The normalization in Eq.~\ref{eq:lgi_formula} also ensures that the resulting geometric brake is uniformly non-degenerate, as formalized below.

\begin{proposition}[Boundedness of LGI and non-degeneracy of the brake]
\label{prop:lgi_bounded_brake}
Let $\rho_t(x_t)$ be the LGI score defined in Eq.~\ref{eq:lgi_formula}, with
\begin{equation}
D_i=\frac{f(x_t+\delta u_i)-f(x_t)}{\delta}.
\end{equation}
Assume that the empirical variance in Eq.~\ref{eq:lgi_formula} is computed with the normalization $1/k$, i.e.,
\begin{equation}
\operatorname{Var}(\{D_i\}_{i=1}^k)
=
\frac{1}{k}\sum_{i=1}^k(D_i-\bar D)^2,
\qquad
\bar D=\frac{1}{k}\sum_{i=1}^kD_i.
\end{equation}
Then
\begin{equation}
0\le \rho_t(x_t)<1.
\end{equation}
Consequently, for any finite $\lambda>0$, the geometric brake satisfies
\begin{equation}
e^{-\lambda}<\exp(-\lambda\rho_t(x_t))\le 1.
\end{equation}
Moreover, the mean brake coefficient used in the differential-inclusion analysis,
\begin{equation}
\bar\alpha(x)=\mathbb{E}[\exp(-\lambda\rho(x))\mid x_t=x],
\end{equation}
is uniformly bounded away from zero:
\begin{equation}
\bar\alpha(x)\ge \alpha_{\min}>0,
\qquad
\alpha_{\min}=e^{-\lambda}.
\end{equation}
\end{proposition}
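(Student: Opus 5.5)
The plan is to interpret $\mathbb{E}[D_i^2]$ in Eq.~\ref{eq:lgi_formula} as the empirical second moment $\frac1k\sum_{i=1}^k D_i^2$, matching the $1/k$ normalization assumed for the variance. The bound on $\rho_t$ then reduces to the bias--variance identity for empirical moments:
\begin{equation}
\frac1k\sum_{i=1}^k D_i^2 \;=\; \operatorname{Var}(\{D_i\}_{i=1}^k) + \bar D^{\,2}.
\end{equation}
This follows by expanding $(D_i-\bar D)^2$ and using $\sum_i (D_i-\bar D)=0$.

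\textbf{Bounding $\rho_t$.} Write $V=\operatorname{Var}(\{D_i\})$ and $S=\frac1k\sum_i D_i^2$.
\begin{itemize}
\item Nonnegativity: $V\ge 0$ because it is an average of squares, and the denominator satisfies $S+\epsilon\ge\epsilon>0$. Hence $\rho_t\ge 0$.
\item Strict upper bound: the identity gives $V = S-\bar D^{\,2}\le S$. Since $\epsilon>0$,
\[
\rho_t=\frac{V}{S+\epsilon}\le\frac{S}{S+\epsilon}<1 .
\]
\end{itemize}
The strictness comes from $\epsilon$, not from $\bar D$. When $S=0$ we get $\rho_t=0<1$ trivially, so no case split is needed beyond noting $\epsilon>0$.

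\textbf{Brake bounds.} The map $s\mapsto e^{-\lambda s}$ is strictly decreasing on $\mathbb{R}$ for $\lambda>0$. Applying it to $0\le\rho_t<1$ gives $e^{-\lambda}<\exp(-\lambda\rho_t)\le 1$.

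\textbf{Mean brake coefficient.} The bound $\exp(-\lambda\rho(x))>e^{-\lambda}$ holds pointwise for every realization of the random directions $u_1,\dots,u_k$ and, if present, the stochastic minibatch inside $f$. Monotonicity of conditional expectation then gives $\bar\alpha(x)\ge e^{-\lambda}=\alpha_{\min}$. The conditional expectation is well defined because the integrand lies in $(e^{-\lambda},1]$, so it is bounded and measurable, given that $\rho$ is a measurable function of the samples.

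\textbf{Main obstacle.} The only real subtlety is interpretive. If $\mathbb{E}[D_i^2]$ were the population second moment $\mathbb{E}_u[D(u)^2]$ rather than the empirical one, the inequality $V\le S$ could fail for a finite sample. I would therefore state explicitly that both moments are computed on the same $k$ samples with the same $1/k$ normalization, which is what the hypothesis on the variance signals. Once that is fixed, the rest is routine.
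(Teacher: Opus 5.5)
Your proposal is correct and matches the paper's proof essentially step for step. The paper also reads the denominator as $\frac1k\sum_i D_i^2$, uses $\operatorname{Var}(\{D_i\})=\frac1k\sum_i D_i^2-\bar D^2\le\frac1k\sum_i D_i^2$ with the strict inequality coming from $\epsilon>0$, and then applies monotonicity of $s\mapsto e^{-\lambda s}$ and of conditional expectation. Your interpretive caveat agrees with the paper's closing remark, which observes that the same argument works for the fully population version, where both the variance and the second moment are taken over $u$.
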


\begin{proof}
See Appendix~\ref{app:a}.
\end{proof}

\begin{lemma}[Finite-Sample Estimation Guarantee]
\label{lem:lgi_estimation}
For any $\Delta > 0$ and $\delta \in (0,1)$, with $k = O\left(\frac{L^6}{\epsilon^4\Delta^2}\log(1/\delta)\right)$ samples, we have:
\[
|\hat{\rho}_k - \rho| \leq \Delta \quad \text{with probability at least } 1-\delta,
\]
where $L$ is the Lipschitz constant of $f$.
\end{lemma}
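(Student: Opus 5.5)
We view $\hat\rho_k$ as a smooth function of two empirical means and apply concentration plus a Lipschitz perturbation bound. Fix $x=x_t$ and write $D(u)=(f(x+\delta u)-f(x))/\delta$ for $u\sim\mathcal{U}(\mathbb{S}^{d-1})$. The population quantity is
\[
\rho=\frac{\mathbb{E}[D^2]-(\mathbb{E}D)^2}{\mathbb{E}[D^2]+\epsilon}=\frac{B-A^2}{B+\epsilon},
\qquad A=\mathbb{E}D,\; B=\mathbb{E}D^2 .
\]
Its estimator is $\hat\rho_k=(\hat B-\hat A^2)/(\hat B+\epsilon)$, with $\hat A=\frac1k\sum_i D_i$ and $\hat B=\frac1k\sum_i D_i^2$. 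This matches the $1/k$-normalized variance of Proposition~\ref{prop:lgi_bounded_brake}, and we take the denominator to use the empirical second moment. Since $f$ is $L$-Lipschitz and $\|u_i\|=1$, each sample satisfies $|D_i|\le L$. Hence $D_i\in[-L,L]$ and $D_i^2\in[0,L^2]$ are bounded i.i.d. variables.

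\textbf{Step 1 (concentration).} Apply Hoeffding's inequality to each empirical mean and take a union bound with confidence $\delta/2$ each. With probability at least $1-\delta$,
\[
|\hat A-A|\le L\sqrt{\tfrac{2\log(4/\delta)}{k}}=:\eta_A,
\qquad
|\hat B-B|\le L^2\sqrt{\tfrac{\log(4/\delta)}{2k}}=:\eta_B .
\]

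\textbf{Step 2 (deterministic perturbation).} Set $N=B-A^2$ and $\hat N=\hat B-\hat A^2$. Split the error as
\[
\hat\rho_k-\rho=\frac{\hat N-N}{\hat B+\epsilon}+N\Big(\frac{1}{\hat B+\epsilon}-\frac{1}{B+\epsilon}\Big).
\]
For the numerator difference, $|\hat A^2-A^2|=|\hat A-A|\,|\hat A+A|\le 2L\eta_A$, so $|\hat N-N|\le \eta_B+2L\eta_A$. Both denominators are at least $\epsilon$, which gives
\[
\Big|\frac{1}{\hat B+\epsilon}-\frac{1}{B+\epsilon}\Big|\le\frac{\eta_B}{\epsilon^2},
\qquad 0\le N\le L^2 .
\]
Combining these,
\[
|\hat\rho_k-\rho|\le \frac{\eta_B+2L\eta_A}{\epsilon}+\frac{L^2\eta_B}{\epsilon^2}\lesssim \frac{L^2}{\epsilon}\Big(1+\frac{L^2}{\epsilon}\Big)\sqrt{\frac{\log(1/\delta)}{k}} .
\]

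\textbf{Step 3 (sample size).} Require the right-hand side to be at most $\Delta$. In the regime $L^2\gtrsim\epsilon$ this gives $k=O\!\big(\tfrac{L^8}{\epsilon^4\Delta^2}\log(1/\delta)\big)$.

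\textbf{Main obstacle.} The crude route above yields $L^8$, not the claimed $L^6$. Matching the stated exponent needs a finer variance-sensitive bound (Bernstein) on $\hat B$, using
\[
\mathrm{Var}(D^2)\le L^2\,\mathbb{E}D^2=L^2B .
\]
It also needs the second error term kept in the form $N\eta_B/\big((B+\epsilon)(\hat B+\epsilon)\big)$, and the ratio $N/(B+\epsilon)\le 1$ absorbs one factor. Concretely, that term is at most $\eta_B/(\hat B+\epsilon)\lesssim \eta_B/\epsilon$, and the numerator term is also $\lesssim (L\eta_A+\eta_B)/\epsilon$. This gives the sharper rate $k=O\!\big(\tfrac{L^4}{\epsilon^2\Delta^2}\log(1/\delta)\big)$. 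That rate is dominated by the stated $O\!\big(\tfrac{L^6}{\epsilon^4\Delta^2}\log(1/\delta)\big)$ whenever $L^2\ge\epsilon$, which we assume, since otherwise the trivial bound $|\hat\rho_k-\rho|<1$ from Proposition~\ref{prop:lgi_bounded_brake} handles $\Delta\ge1$. The point to check is that the stated big-$O$ is only an upper bound on the needed $k$, so proving the sharper sufficient condition establishes the lemma. Finally, the symbol $\delta$ is used both as the probe radius and as the failure probability. We will rename the failure probability internally to avoid a clash; the bound does not depend on the probe radius beyond $|D_i|\le L$.
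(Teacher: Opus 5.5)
Your argument is correct in substance and has the same skeleton as the paper's proof. Both apply Hoeffding to the two bounded empirical moments, take a union bound, and finish with a deterministic perturbation bound on the ratio. Your $\hat A,\hat B$ are the paper's $\hat\mu$ and $\frac1k\sum_i Y_i^2$, and your $\hat\rho_k=(\hat B-\hat A^2)/(\hat B+\epsilon)$ is exactly its $\hat\sigma^2/(\hat\mu^2+\hat\sigma^2+\epsilon)$.

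The perturbation step is where you genuinely differ. The paper first converts the moment bounds into $|\hat\sigma^2-\sigma^2|<\tau$. It then applies the mean value theorem to $g(a,b)=b/(a+b+\epsilon)$ at $(\mu^2,\sigma^2)$, using the global bound $\|\nabla g\|\le M=\sqrt{\epsilon^2+L^4}/\epsilon^2$, which comes from $|\partial_a g|\le L^2/\epsilon^2$. It sets $\tau=\Delta/(M\sqrt{4L^2+1})$ and solves $6\exp(-k\tau^2/(32L^4))\le\delta$. Its $L^6/\epsilon^4$ is $L^4(4L^2+1)M^2$ with $M$ declared $O(1/\epsilon^2)$. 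That silently drops the $L^4$ inside $M^2$: taken literally, the computation gives order $L^4(\epsilon^2+L^4)(4L^2+1)/\epsilon^4$, i.e. $L^{10}$ for large $L$. Merging the exponentials also uses $16L^2\ge1$, so the paper effectively treats $L$ as order one.

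You instead keep the second term as $\rho\cdot|\hat B-B|/(\hat B+\epsilon)$ and use $\rho\le 1$. This is the same observation as $|\partial_a g|=b/(a+b+\epsilon)^2\le 1/\epsilon$ rather than $L^2/\epsilon^2$. It removes the lossy factor and yields $k=O(L^4\log(1/\delta)/(\epsilon^2\Delta^2))$. That rate is strictly better and the bookkeeping is transparent. It also validates the stated order for every $L\ge\epsilon$, a wider range than the paper's own computation supports.

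Three corrections are needed.

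\textbf{Bernstein plays no role.} With your Step~1 Hoeffding radii, the $\rho\le1$ absorption already gives $|\hat\rho_k-\rho|\le(2\eta_B+2L\eta_A)/\epsilon=3\sqrt2\,(L^2/\epsilon)\sqrt{\log(4/\delta)/k}$. So $k\ge 18L^4\log(4/\delta)/(\epsilon^2\Delta^2)$ suffices. Drop the claim that a variance-sensitive inequality is required.

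\textbf{The domination threshold is wrong.} $L^4/\epsilon^2\le L^6/\epsilon^4$ is equivalent to $L\ge\epsilon$, not $L^2\ge\epsilon$. The latter implies the former only when $\epsilon\le1$; it fails for $\epsilon=4$, $L=2$.

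\textbf{The fallback cannot cover the regime $L<\epsilon$.} The bound $|\hat\rho_k-\rho|<1$ only handles $\Delta\ge1$, and in this regime the stated order is actually false. Take $f(y)=Ly_1$ in $d=2$, so $D=L\cos\theta$ and $\rho=L^2/(L^2+2\epsilon)$. With $\Delta=L^2/(4\epsilon)$, the stated count $O(L^2\log(1/\delta)/\epsilon^2)$ drops to $k=1$ as $L/\epsilon\to0$. At $k=1$ the empirical variance vanishes, so $\hat\rho_k=0$, and the error $\rho$ exceeds $\Delta$ whenever $L^2<2\epsilon$. State $L\ge\epsilon$ as an explicit hypothesis; this is harmless with $\epsilon=10^{-6}$. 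The paper relies on a comparable unstated regime assumption.
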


\begin{proof}
See Appendix~\ref{app:sample-complexity} for concentration analysis using Hoeffding's inequality and error propagation.
\end{proof}

In practice, the probe count $k$ exposes a cost--fidelity trade-off: a small $k$ gives a low-cost instability estimate, while larger $k$ values provide a higher-fidelity probe (Section~\ref{sec:experiments}).

\subsection{S-Adam Algorithm Design}
\label{subsec:sadam_algorithm}

S-Adam modifies the standard Adam update by introducing an \textbf{adaptive geometric brake} $\exp(-\lambda \rho_t)$:

\begin{algorithm}[htbp]
\caption{Singularity-aware Adam (S-Adam)}
\label{alg:sadam}
\begin{algorithmic}[1]
\REQUIRE Learning rate $\eta$, damping coefficient $\lambda$, probe count $k$, perturbation scale $\delta$, stability constant $\epsilon$, total iterations $T$
\STATE Initialize parameters $w_1$, moment estimates $m_0=0$, $v_0=0$
\FOR{$t = 1$ to $T$}
    \STATE \textbf{Geometric probing:}
    \STATE Sample $k$ directions $u_1,\dots,u_k \sim \mathcal{U}(\mathbb{S}^{d-1})$
    \STATE Compute directional derivatives 
    \STATE $D_i = \frac{f(w_t + \delta u_i) - f(w_t)}{\delta}$
    \STATE Compute LGI score: $\rho_t \leftarrow \frac{\mathrm{Var}(\{D_i\})}{\mathrm{Mean}(\{D_i^2\}) + \epsilon}$
    
    \STATE \textbf{Gradient computation:}
    \STATE Compute stochastic gradient $g_t \leftarrow \nabla f(w_t)$
    
    \STATE \textbf{Moment updates:}
    \STATE $m_t \leftarrow \beta_1 m_{t-1} + (1-\beta_1) g_t$ 
    \STATE $v_t \leftarrow \beta_2 v_{t-1} + (1-\beta_2) g_t^2$ 
    
    \STATE \textbf{Adaptive step size modulation:}
    \STATE $\hat{\eta}_t \leftarrow {\eta}_t \cdot \exp(-\lambda \rho_t)$
    
    \STATE \textbf{Parameter update:}
    \STATE $w_{t+1} \leftarrow w_t - \hat{\eta}_t \cdot \frac{m_t}{\sqrt{v_t} + \epsilon}$
\ENDFOR
\end{algorithmic}
\end{algorithm}

\paragraph{Exponential Damping} The choice $\exp(-\lambda \rho_t)$ ensures:
\begin{itemize}
    % \item \textbf{Smooth regions} ($\rho_t \approx 0$): $\exp(-\lambda\rho_t) \approx 1$, recovering standard Adam
    \item \textbf{Singularities} ($\rho_t \gg 0$): $\exp(-\lambda\rho_t) \ll 1$, aggressively slowing updates
    \item \textbf{Monotonicity}: Exponential decay provides stronger damping for higher instability
\end{itemize}

\paragraph{Connection to Proximal Methods} As shown in Appendix~\ref{app:sadam-proximal}, the geometric brake approximates solving a proximal subproblem with dynamic regularization:
\begin{equation}
\exp(-\lambda\rho_t) \approx \frac{1}{1 + \lambda\rho_t} \approx \frac{\eta_t}{\eta_t\gamma_t + 1}, \quad \gamma_t = \frac{\lambda}{\eta_t}\rho_t.
\end{equation}
This establishes S-Adam as an implicit proximal method with geometry-aware regularization.

\paragraph{Computational Overhead} S-Adam requires $k$ additional forward passes per iteration, so the probe count directly controls wall-clock cost. Our experiments report both a low-cost setting and a higher-fidelity setting to expose this trade-off (Section~\ref{sec:experiments}).

\section{Convergence Analysis}
\label{sec:convergence}

We now establish rigorous convergence guarantees for S-Adam in non-smooth settings. Our analysis employs the framework of \textbf{differential inclusions} \cite{Benaim1996dynamical}, which naturally handles set-valued gradients.

\subsection{Problem Setup and Assumptions}

Consider the optimization problem $\min_{x\in\mathbb{R}^d} f(x)$, where $f$ satisfies:

\begin{assumption}[Regularity]
\label{ass:regularity}
$f$ is locally Lipschitz continuous, bounded below, and path-differentiable (in the sense of \cite{davis2018stochastic}).
\end{assumption}

\begin{assumption}[Bounded Iterates]
\label{ass:bounded}
The sequence $\{x_t\}$ generated by S-Adam remains in a compact set almost surely.
\end{assumption}

\begin{assumption}[Step Size Schedule]
\label{ass:stepsize}
The learning rate sequence $\{\eta_t\}$ satisfies $\sum_{t=1}^\infty \eta_t = \infty$ and $\sum_{t=1}^\infty \eta_t^2 < \infty$.
\end{assumption}

\begin{assumption}[Moment Estimator Consistency]
\label{ass:moments}
Let $m_t$ and $v_t$ be S-Adam's moment estimates. There exist sequences $\delta_t \to 0$ and $\sigma_{\min} > 0$ such that:
\begin{enumerate}
    \item $\mathbb{E}[m_t \mid x_t = x] \in \partial_C f(x) + \mathbb{B}(0, \delta_t)$
    \item $\mathbb{E}[v_t \mid x_t = x] \geq \sigma_{\min}$
\end{enumerate}
\end{assumption}

These are standard in non-smooth stochastic optimization \cite{davis2018stochastic}. Assumption~\ref{ass:moments} ensures that moment estimates asymptotically capture subgradient information. For the Lyapunov descent step, we additionally use the following descent-alignment condition.

% \begin{assumption}[Descent Alignment]
% \label{ass:descent_alignment}
% There exists a constant $c_1>0$ such that, along any solution trajectory of the mean-field differential inclusion, if $\dot{x}(t)=-\bar{\alpha}(x(t))h_t$ with $h_t\in\mathcal{H}(x(t))$, then the Clarke chain-rule vector $\xi_t\in\partial_C f(x(t))$ can be chosen to satisfy
% \begin{equation}
% \langle \xi_t,h_t\rangle
% \ge
% c_1\,\mathrm{dist}\!\left(0,\partial_C f(x(t))\right)^2.
% \end{equation}
% \end{assumption}

\subsection{Main Convergence Theorem}

Our main result establishes convergence to Clarke stationary points:

\begin{theorem}[Convergence to Clarke Stationarity]
\label{thm:main_convergence}
Under Assumptions~\ref{ass:regularity}--\ref{ass:moments}, the sequence $\{x_t\}$ generated by S-Adam satisfies:
\begin{equation}
   \liminf_{t\to\infty} \mathrm{dist}\big(0, \partial_C f(x_t)\big) = 0 \quad \text{almost surely}. 
\end{equation}

Moreover, for any $\epsilon > 0$, S-Adam reaches an $\epsilon$-stationary point in $O(1/\epsilon^2)$ iterations.
\end{theorem}

\begin{proof}
Let $\mathcal{F}_t$ denote the filtration generated by the iterates, mini-batches, and random probing directions up to time $t$. The S-Adam update can be written in the canonical stochastic approximation form of differential inclusions~\cite{benaim2006stochastic}:
\begin{equation}
x_{t+1}=x_t+\eta_t\left[F(x_t)+M_{t+1}+R_{t+1}\right],
\label{eq:sadam_dsa_form}
\end{equation}
where
\begin{equation}
F(x)\in -\bar{\alpha}(x)\mathcal{H}(x),\qquad
\bar{\alpha}(x)=\mathbb{E}\!\left[\exp(-\lambda\rho(x))\mid x_t=x\right].
\end{equation}
Here $\mathcal{H}(x)$ denotes the Clarke-valued mean Adam direction induced by the preconditioned moment estimate, $M_{t+1}$ is a martingale difference term with $\mathbb{E}[M_{t+1}\mid\mathcal{F}_t]=0$, and $R_{t+1}$ collects the asymptotically vanishing bias from moment estimation and LGI estimation.

\textbf{Asymptotic pseudotrajectory.}
Define the continuous time scale $\tau_0=0$ and $\tau_n=\sum_{i=1}^n\eta_i$. Let $X(\tau_n)=x_n$ and interpolate affinely on each interval $[\tau_n,\tau_{n+1}]$. Assumption~\ref{ass:stepsize} gives the Robbins--Monro conditions $\sum_t\eta_t=\infty$ and $\sum_t\eta_t^2<\infty$, while Assumption~\ref{ass:bounded} keeps the iterates in a compact set almost surely. By Assumption~\ref{ass:moments}, the conditional mean update satisfies
\begin{equation}
\mathbb{E}[x_{t+1}-x_t\mid\mathcal{F}_t]/\eta_t
\in -\bar{\alpha}(x_t)\mathcal{H}(x_t)+o(1),
\end{equation}
and the martingale term satisfies, for every fixed horizon $T>0$,
\begin{equation}
\lim_{n\to\infty}\;
\sup_{\tau_k-\tau_n\le T}
\left\|\sum_{i=n}^{k-1}\eta_iM_{i+1}\right\|=0
\quad\text{a.s.}
\end{equation}
The remainder term obeys the same vanishing tail bound because $R_{t+1}\to0$ on the compact trajectory and $\eta_t\to0$. Moreover, the Clarke subdifferential is nonempty, convex, compact-valued and upper semicontinuous for locally Lipschitz $f$; the Adam preconditioner is bounded by Assumption~\ref{ass:moments}; and $\bar{\alpha}(x)$ is bounded by Corollary~\ref{cor:brake_bound}. Thus the drift map satisfies the standard closed-graph, compact-convex-value, and growth requirements for differential inclusions. Properties 1--2 of~\cite{benaim2006stochastic} therefore imply that $X(t)$ is an asymptotic pseudotrajectory (APT) of the mean-field differential inclusion
\begin{equation}
\dot{x}(t)\in -\bar{\alpha}(x(t))\mathcal{H}(x(t)).
\label{eq:sadam_mean_di}
\end{equation}

\textbf{Strict Lyapunov descent.}
We verify that $f$ is a strict Lyapunov function for~\eqref{eq:sadam_mean_di}. For any absolutely continuous solution $x(t)$ and almost every $t$, the chain rule for path-differentiable locally Lipschitz functions gives a vector $\xi_t\in\partial_C f(x(t))$ such that
\begin{equation}
\frac{d}{dt}f(x(t))=\langle \xi_t,\dot{x}(t)\rangle .
\end{equation}
Since $\dot{x}(t)=-\bar{\alpha}(x(t))h_t$ for some $h_t\in\mathcal{H}(x(t))$, %Assumption~\ref{ass:descent_alignment} gives
\begin{equation}
\langle \xi_t,h_t\rangle
\ge c_1\,\mathrm{dist}\!\left(0,\partial_C f(x(t))\right)^2
\end{equation}
for a constant $c_1>0$. By Corollary~\ref{cor:brake_bound}, $\bar{\alpha}(x)\ge\alpha_{\min}>0$. Hence
\begin{equation}
\frac{d}{dt}f(x(t))
\le
-\alpha_{\min}c_1\,
\mathrm{dist}\!\left(0,\partial_C f(x(t))\right)^2.
\label{eq:strict_lyapunov_main}
\end{equation}
The derivative can vanish only on the Clarke stationary set
\begin{equation}
\Lambda=\left\{x:\,0\in\partial_C f(x)\right\}.
\end{equation}
Thus $f$ is strict outside $\Lambda$; see Lemma~\ref{lem:lyapunov_descent} for the detailed descent argument.

\textbf{Invariance principle.}
Since the discrete trajectory is almost surely bounded, its limit set $L(\{x_t\})$ is almost surely nonempty, compact, and connected. Step~1 shows that the affine interpolation is an APT of equation~\eqref{eq:sadam_mean_di}. Therefore, by Benaim's APT limit-set theorem~\cite{Benaim1996dynamical}, $L(\{x_t\})$ is an internally chain transitive invariant set of the mean-field flow. Applying the Lyapunov invariance principle for internally chain transitive sets~\cite{Benaim1996dynamical} to the strict Lyapunov function in Step~2 gives
\begin{equation}
L(\{x_t\})\subseteq \Lambda .
\end{equation}
Consequently every accumulation point of S-Adam is Clarke stationary, and in particular
\begin{equation}
   \liminf_{t\to\infty} \mathrm{dist}\big(0,\partial_C f(x_t)\big)=0
   \quad \text{a.s.}
\end{equation}

Finally, summing the one-step descent inequality associated with~\eqref{eq:strict_lyapunov_main} gives
\begin{equation}
\min_{0\le t<T}
\mathbb{E}\,\mathrm{dist}\big(0,\partial_C f(x_t)\big)^2
\le O(T^{-1/2}),
\end{equation}
under the standard finite-horizon choice $\eta_t=\Theta(T^{-1/2})$. Hence an $\epsilon$-stationary point is reached in $O(1/\epsilon^2)$ iterations.
\end{proof}

\subsection{Interpretation and Implications}

\paragraph{Optimal Convergence Rate} The $O(1/\epsilon^2)$ iteration complexity matches the optimal rate for stochastic non-smooth optimization \cite{Benaim1996dynamical}, showing S-Adam achieves optimal efficiency despite adaptive damping.

\paragraph{Role of Geometric Brake} The damping coefficient $\bar{\alpha}(x)$ modulates convergence speed but preserves descent direction. Crucially, $\bar{\alpha}(x) \geq \alpha_{\min} > 0$ (proved in Appendix~\ref{app:a}), ensuring updates never stall completely.

\paragraph{Degeneration to Standard Adam} In smooth regions where $\rho(x) \to 0$, we have $\bar{\alpha}(x) \to 1$, recovering standard Adam dynamics. This behavior is consistent with the $k=1$ and $\lambda=0$ ablations in Table~\ref{tab:ablation}, where disabling the LGI variance signal or damping reduces S-Adam toward the AdamW-like baseline.

\begin{remark}[Comparison with Existing Guarantees]
Unlike smooth-optimization analyses that assume Lipschitz gradients, Theorem~\ref{thm:main_convergence} holds under only local Lipschitz continuity. Unlike subgradient methods with $O(1/\sqrt{t})$ rates, S-Adam achieves the same rate with adaptive preconditioning.
\end{remark}

\section{Experiments}
We conducted two primary experiments to test the robustness of Local Geometric Instability (LGI) damping:
\label{sec:experiments}
\subsection{Experiment 1: Quantization-Aware Training (QAT) on Singular Landscapes}
The primary objective of this experiment is to evaluate S-Adam’s robustness against the extreme non-smoothness introduced by low-precision operations. We deploy a QATNet architecture—a customized CNN designed for 2/4/8-bit simulated quantization of both weights and activations. \textbf{Geometric Stress Injection}: To simulate the "jagged" and discontinuous loss landscapes inherent in quantized networks\cite{10.5555/3327345.3327535}, we implement a "Simulated Quantization Mapping" autograd function. This module applies a rounding and clamping operation:
\begin{equation}
x_q = \text{clamp}\left(\lfloor x \cdot \text{scale} \rceil, q_{min}, q_{max}\right),
\end{equation}
These discrete steps in the quantized loss surface create regions where the classical gradient is either zero or undefined\cite{10.5555/3737916.3742179}, inducing severe gradient chattering in standard momentum-based optimizers.
% Crucially, while we employ the Straight-Through Estimator (STE) in the backward pass to prevent vanishing gradients for baseline optimizers (approximating $\nabla x_q \approx \nabla x$), the actual loss surface remains discrete. This creates a severe gradient-topology mismatch: standard momentum-based optimizers are guided by a smooth proxy gradient that contradicts the true "stepped" geometry of the manifold. This mismatch induces the specific "gradient chattering" pathology that S-Adam is designed to detect and dampen via its forward-pass geometric probes.
\subsection{Experiment 2: High-Noise Learning on Small Batch Size}
To evaluate S-Adam's efficacy in high-dimensional, pretrained networks, we employ a ResNet18 backbone pretrained on ImageNet\cite{5206848}, whose ReLU activations render the loss landscape inherently non-smooth (piecewise-linear kinks). We manipulate the stochastic gradient noise scale by varying the batch size $N$\cite{10.5555/3294771.3294936}. \textbf{Primary stress test ($N=2$):} Reducing the batch size to $N=2$ maximizes the variance of the stochastic gradient estimator, creating a high-noise regime in which momentum-based optimizers are prone to unstable updates. \textbf{Noise sensitivity ablation($N \in \{2,4,16,64\}$):} To validate the adaptivity of the Geometric Brake, we conduct an ablation study across a spectrum of batch sizes. \\\\           
This setup stresses the geometric brake under high gradient noise without altering the source of non-smoothness: the ReLU kinks of the pretrained network provide intrinsic singularities at every batch size, while a small $N$ raises gradient variance and pushes momentum-based methods to oscillate across these kinks. The experiment therefore probes whether damping driven by local geometric instability can stabilize training in a high-noise regime where AdamW and Prox-SGD are prone to unstable updates.
\subsection{Implementation Fidelity \& Reproducibility}
To ensure a rigorous comparison between S-Adam, AdamW, and Prox-SGD, we enforce the following controls based on our implementation:(1)Initialization Parity: A fixed manual seed (42) is applied across all trials to ensure identical weight initialization for the ResNet18\cite{He2015DeepRL}. (2)Geometric Probing: S-Adam is reported with \textbf{$k=2$} and \textbf{$k=8$} random unit-vector perturbations; $k=2$ favors speed and cost, while $k=8$ provides a higher-fidelity probe. The perturbation scale is $\delta=0.01$. We conduct a comprehensive empirical evaluation to validate the theoretical properties and practical effectiveness of S-Adam. Baseline parameters (AdamW, Prox-SGD) were independently optimized for maximum convergence stability (loss curves in the appendix). \textbf{Reporting convention:} all accuracy values refer to the best test accuracy across epochs; baselines may diverge after their peak (see Figures~\ref{fig:Loss}, \ref{fig:epoch_Accuracy}).All experiments were run on NVIDIA A800.

\subsection{Quantization-Aware Training (QAT) on Singular Landscapes}
\begin{table}[ht!]
\centering
\small
\setlength{\tabcolsep}{3.5pt}
\caption{QAT performance across datasets. Bit-depths: CIFAR-100 is 2-bit; TinyImageNet and Imagewoof2-160 are 4-bit; ImageNet is 8-bit. An asterisk ($\ast$) marks failure to converge.}
\label{tab:performance_comparison}

% --- Section 1: Accuracy ---
\resizebox{\columnwidth}{!}{%
\begin{tabular}{@{}lcccc@{}}
\toprule
\multicolumn{5}{c}{\textbf{Accuracy (\%)}} \\ \midrule
Optimizer & CIFAR100 & TinyImg & Imagewoof & ImageNet \\ \midrule
Prox-SGD       & 14.13 & 19.89 & 31.18 & 9.66 \\
AdamW          & 15.94 & 18.91 & 33.24 & 8.63 \\
\textbf{S-Adam ($k{=}2$)} & 18.11 & 22.81 & \textbf{35.57} & 10.97 \\
S-Adam ($k{=}8$)               & \textbf{18.67} & \textbf{23.18} & 35.19 & \textbf{11.24} \\ \bottomrule
\end{tabular}}
% --- Section 2: Convergence Time ---
\resizebox{\columnwidth}{!}{%
\begin{tabular}{@{}lcccc@{}}
\toprule
\multicolumn{5}{c}{\textbf{Convergence Time (s)}} \\ \midrule
Optimizer & CIFAR100 & TinyImg & Imagewoof & ImageNet \\ \midrule
Prox-SGD       & $\ast$ & 286.53 & 38.98 & 12322 \\
AdamW          & 15.89 & 281.49 & 34.96 & 8186 \\
\textbf{S-Adam ($k{=}2$)} & \textbf{14.33} & \textbf{258.54} & \textbf{34.74} & \textbf{8297} \\
S-Adam ($k{=}8$)               & 17.41 & 293.14 & 34.38 & 12770 \\ \bottomrule
\end{tabular}}

\end{table}
\textbf{Superior Generalization.}
Table~\ref{tab:performance_comparison} reports two operating points for S-Adam: a low-cost setting ($k{=}2$) and a higher-fidelity setting ($k{=}8$). On the three small-scale QAT benchmarks, both settings improve over AdamW. The $k{=}2$ variant is already competitive and is best on Imagewoof2-160, reaching 35.57\%, while $k{=}8$ gives the strongest accuracy on CIFAR-100 and TinyImageNet. The same pattern extends to  ImageNet (8-bit QAT): S-Adam with $k{=}2$ improves Top-1 accuracy from 8.63\% to 10.97\% at near-identical wall-clock cost to AdamW ($8297$ vs.\ $8186$~s, a $1.4\%$ overhead), and the higher-fidelity $k{=}8$ setting further raises accuracy to 11.24\%.

\textbf{Robustness on Hard Tasks.}
On \textbf{Imagewoof2-160}, which requires distinguishing subtle features between dog breeds, S-Adam ($k{=}2$) achieves \textbf{35.57\%} and remains close to the $k{=}8$ setting (35.19\%). Prox-SGD lags significantly behind (31.18\%), suggesting that its aggressive analytical thresholding is too "greedy" for non-convex landscapes, pruning important features prematurely. S-Adam's "soft" geometric damping preserves discriminative features while still handling nonsmoothness.

\textbf{Efficiency Analysis.}
The probe count $k$ controls S-Adam's cost. With $k{=}2$, S-Adam is faster than AdamW on the small-scale QAT benchmarks (CIFAR-100: $14.33$ vs.\ $15.89$~s; TinyImageNet: $258.54$ vs.\ $281.49$~s; Imagewoof2-160: $34.74$ vs.\ $34.96$~s) and remains within $1.4\%$ on ImageNet ($8297$ vs.\ $8186$~s). The $k{=}8$ setting provides a higher-fidelity probe at higher cost.

\subsubsection{Hyperparameter Ablation}
\begin{table}[H]
\footnotesize
\caption{Ablation study on hyperparameters ($\lambda, k, \delta$). The $k=1$ setting disables the LGI variance signal, while $k=2$ and $k=8$ have similar accuracy.}
\label{tab:ablation}
\resizebox{\columnwidth}{!}{%
\begin{tabular}{@{}lrrrrr@{}}
\toprule
\textbf{Dataset} & \textbf{$k=8$} & \textbf{$k=2$} & \textbf{$k=1$} & \textbf{$\lambda = 0$} & \textbf{$\delta=0.0001$} \\ \midrule
CIFAR100 & 18.67\% & 18.11\% & 15.86\% & 15.82\% & 14.78\% \\
TinyImageNet & 23.18\% & 22.81\% & 19.38\% & 18.29\% & 21.41\% \\
Imagewoof2-160 & 35.19\% & 35.57\% & 32.93\% & 32.99\% & 33.09\% \\ \bottomrule
\end{tabular}
}
\end{table}
The ablation study in Table~\ref{tab:ablation} isolates the impact of three critical hyperparameters on the final test accuracy: the damping intensity ($\lambda$), the number of random probes ($k$), and the perturbation scale ($\delta$). \textbf{The Critical Role of Damping ($\lambda$):} Removing the damping intensity ($\lambda = 0$) causes a significant drop in accuracy across all datasets, most notably a ~4.9\% loss on TinyImageNet. This confirms that geometric damping is essential for stabilizing updates in the presence of quantized noise. \textbf{Geometric Fidelity ($k$):} When $k=1$, the sample variance of the directional derivatives vanishes, disabling the LGI brake and reducing S-Adam to the  AdamW. Moving from $k=2$ to $k=8$ changes accuracy only modestly, indicating that two probes capture most of the useful geometric signal. \textbf{Sensitivity to Perturbation Scale ($\delta$):} A very low $\delta$ value (0.0001) performs poorly, particularly on CIFAR100 (falling to 14.78\%). This implies that the probes need a sufficient "reach" to sense the discrete boundaries introduced by low-bit quantization.
\subsubsection{Superior Convergence Speed and Depth}
\begin{figure}[ht!]
     \centering
     % Left Image
     \begin{subfigure}[b]{0.2\textwidth}
         \centering
         \includegraphics[width=\textwidth]{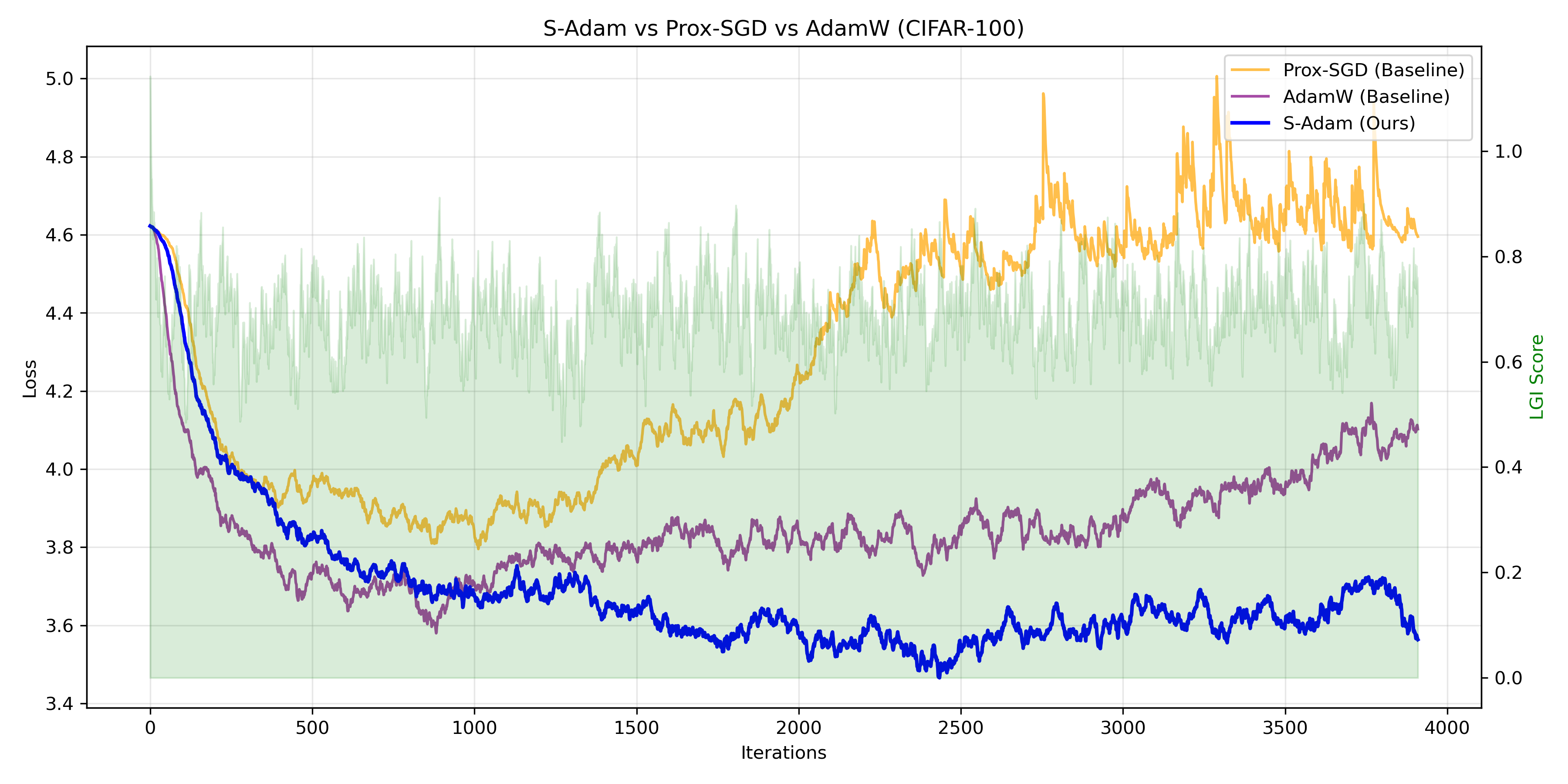}
         \caption{CIFAR-100}
     \end{subfigure}
     % Right Image
     \begin{subfigure}[b]{0.2\textwidth}
         \centering
         \includegraphics[width=\textwidth]{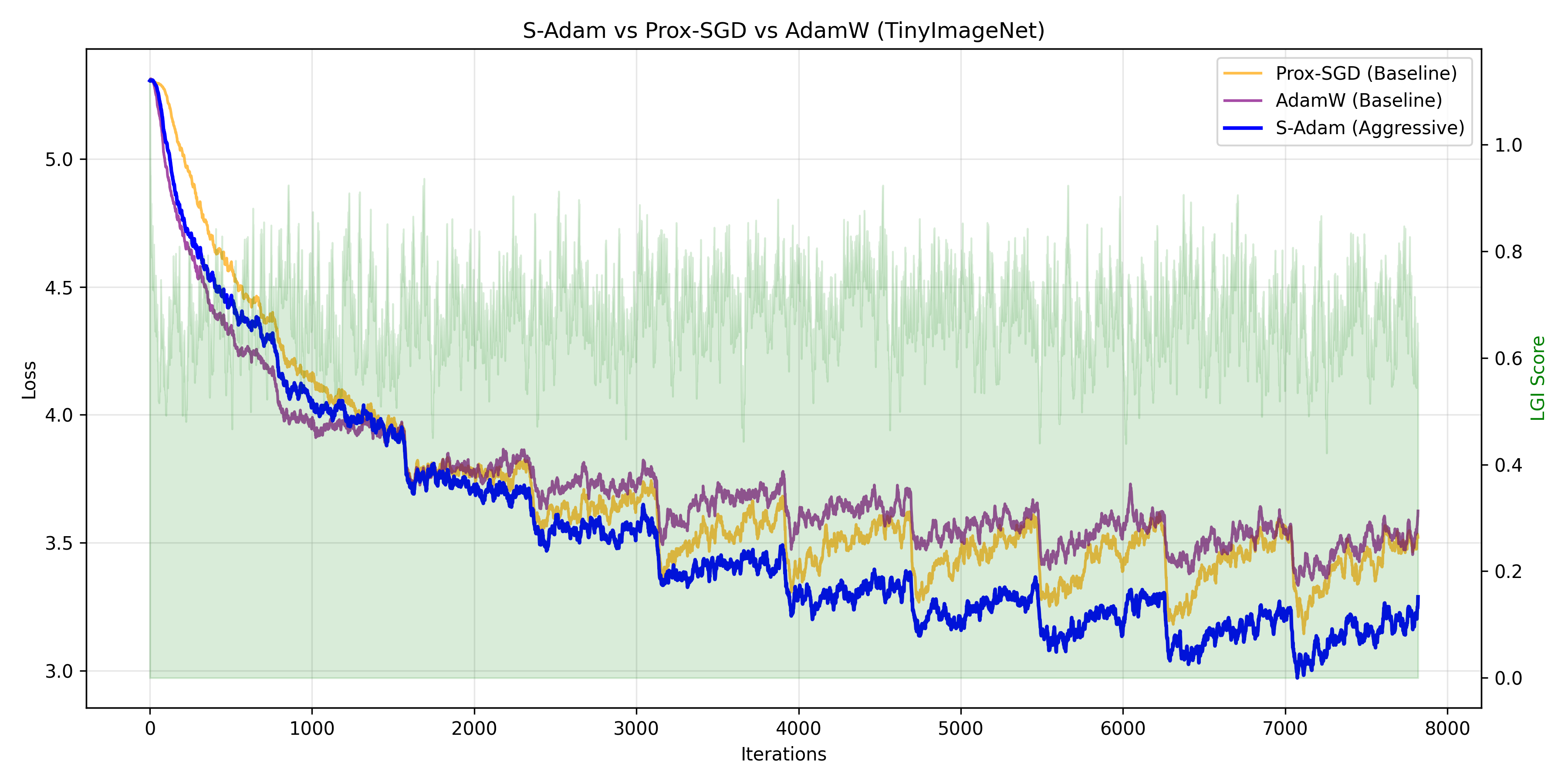}
         \caption{TinyImageNet}
     \end{subfigure}
     \caption{Loss curve}
    \label{fig:Loss}
\end{figure}
 Figure~\ref{fig:Loss} compares the QAT loss trajectories on CIFAR-100 and TinyImageNet. On CIFAR-100, the baselines reduce the loss early but fail to maintain that progress: Prox-SGD rebounds to around $4.6$ with large oscillations, and AdamW gradually drifts upward after its initial descent. S-Adam instead stays near a lower and more stable loss range of $\sim$$3.4$--$3.6$. On TinyImageNet, all optimizers descend without the severe divergence seen on CIFAR-100, but S-Adam still settles at a lower loss floor ($\mathcal{L}\!\approx\!3.0$--$3.2$) than AdamW and Prox-SGD ($\mathcal{L}\!\approx\!3.4$--$3.5$). These two cases show the same effect at different severity levels: when quantization creates unstable local geometry, S-Adam is less likely to overshoot after the initial descent. The mechanism is the LGI-triggered geometric brake, $\eta_t \leftarrow \eta \cdot e^{-\lambda \rho_t}$, which reduces the step size when the estimated local nonsmoothness increases and thereby stabilizes training near sharp low-loss regions.
\subsubsection{Accuracy Trends and Training Stability}
In the CIFAR100 benchmark (Figure~\ref{fig:epoch_Accuracy}), both baseline optimizers exhibit an initial gain followed by a collapse in performance. Prox-SGD and AdamW reach their peak accuracy around Epoch 3 before descending toward \textless 10\% accuracy; by Epoch 10, Prox-SGD collapses to 5\%. This bell-shaped trajectory is consistent with chattering, where the optimizer oscillates across discrete boundaries induced by quantized weights. In contrast, S-Adam follows a robust upward trend and ends at 18.6\% accuracy, maintaining a stable trajectory despite the high-noise QAT setting. The TinyImageNet results (Figure~\ref{fig:accuracy_tiny}) reinforce the same pattern on a more complex 200-class dataset: S-Adam reaches 23.2\% through steady improvement, Prox-SGD shows high volatility between Epochs 4 and 10, and AdamW stagnates before declining.
\begin{figure}[ht!]
    \centering
    % Left Image
    \begin{subfigure}[b]{0.22\textwidth}
        \centering
        \includegraphics[width=\linewidth]{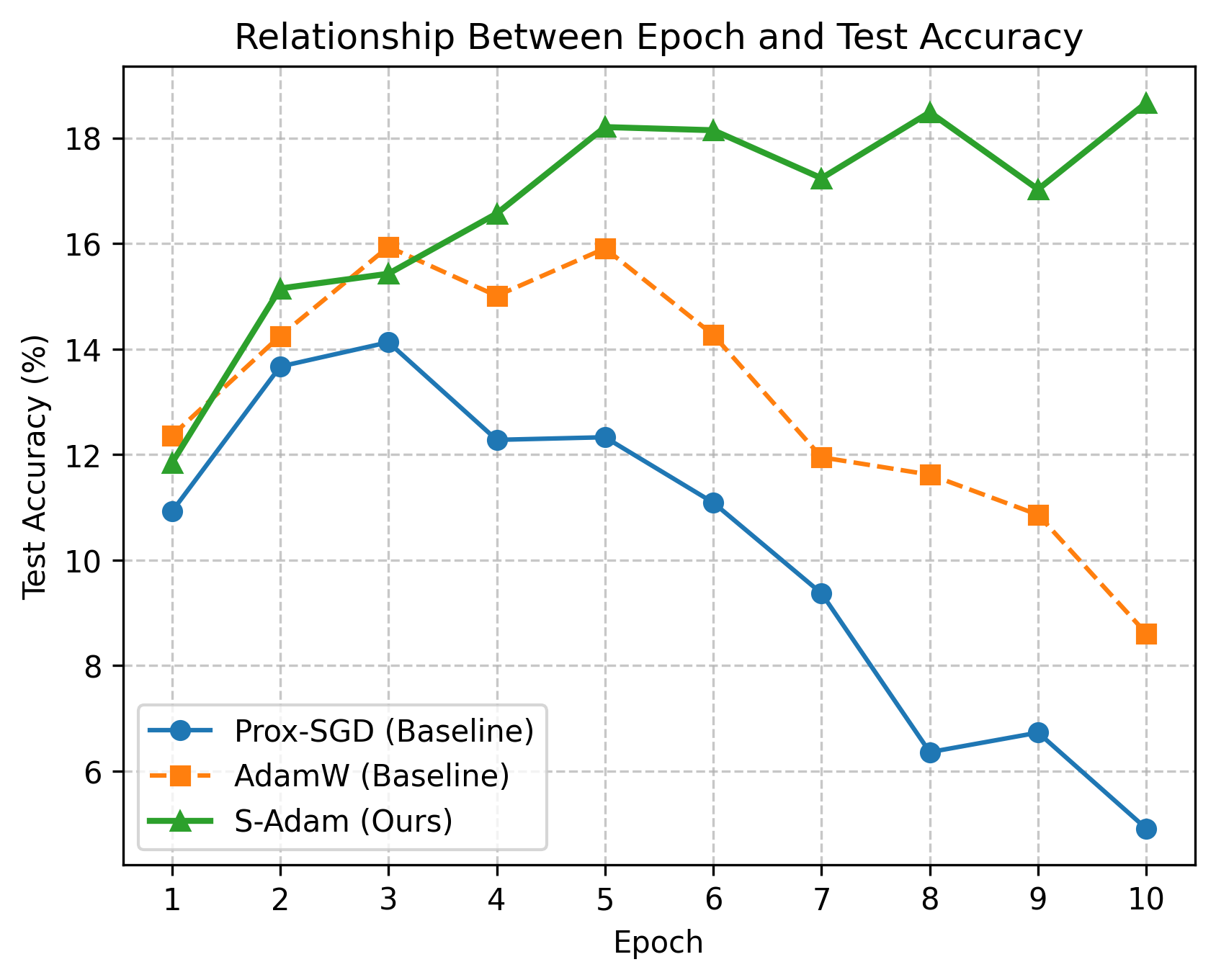}
        \caption{CIFAR-100}
        \label{fig:accuracy_cifar}
    \end{subfigure}
    % Right Image
    \begin{subfigure}[b]{0.22\textwidth}
        \centering
        \includegraphics[width=\linewidth]{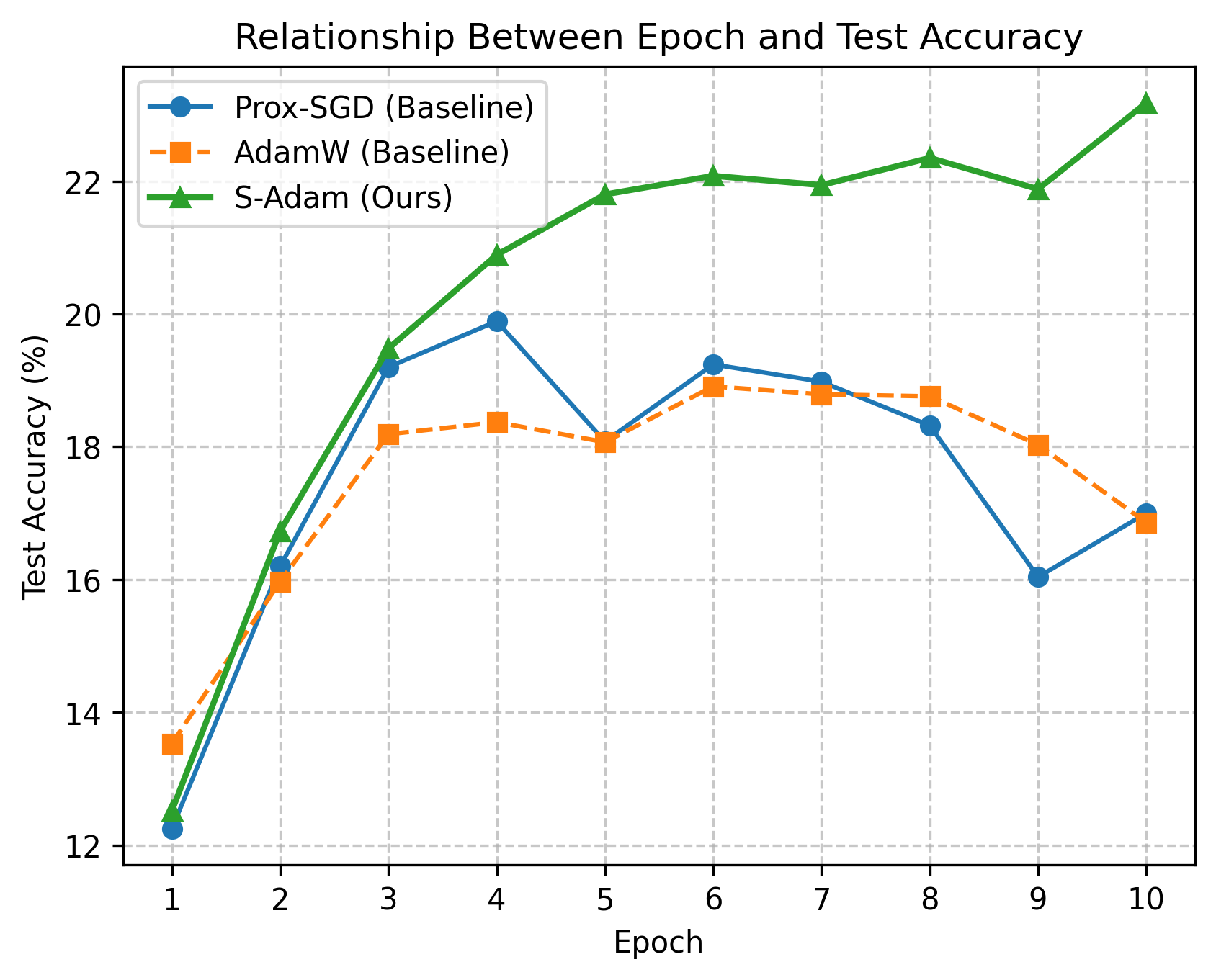}
        \caption{TinyImageNet}
        \label{fig:accuracy_tiny}
    \end{subfigure}
    \caption{Training progress showing Epoch vs. Accuracy for both the CIFAR-100 and TinyImageNet datasets.}
    \label{fig:epoch_Accuracy}
\end{figure}
\subsection{High-Noise Learning on Small Batch Size}
\subsubsection{Resilience to Extreme Stochastic Noise ($N=2$)}
\begin{table}[ht!]
\centering
\small
\setlength{\tabcolsep}{3.5pt}
\caption{Performance comparison for ResNet18 (Batch Size $=2$).}
\label{tab:resnet_performance}

% --- Section 1: Accuracy ---
\resizebox{\columnwidth}{!}{%
\begin{tabular}{lccc}
\hline
\multicolumn{4}{c}{\textbf{Accuracy (\%)}} \\ \hline
\textbf{Optimizer} & \textbf{CIFAR100} & \textbf{CIFAR10} & \textbf{Imagewoof2-160} \\ \hline
Prox-SGD       & 48.24 & 79.44 & 50.22 \\
AdamW          & 51.12 & 80.48 & 51.18 \\
\textbf{S-Adam ($k{=}2$)} &55.84 &85.46 & 74.21 \\
S-Adam ($k{=}8$)               & \textbf{56.11} & \textbf{86.13} & \textbf{75.87} \\ \hline
\end{tabular}}
% --- Section 2: Convergence Time ---
\resizebox{\columnwidth}{!}{%
\begin{tabular}{lccc}
\hline
\multicolumn{4}{c}{\textbf{Convergence Time (s)}} \\ \hline
\textbf{Optimizer} & \textbf{CIFAR100} & \textbf{CIFAR10} & \textbf{Imagewoof2-160} \\ \hline
Prox-SGD & 18299.63 & 11640.01 & 14276.93 \\
AdamW              & 2317.04  & 1681.73  & 408.82 \\
\textbf{S-Adam ($k{=}2$)} & \textbf{2136.53} & \textbf{1096.24} & \textbf{404.07} \\
S-Adam ($k{=}8$)               & 3775.30  & 1458.24  & 558.21 \\ \hline
\end{tabular}}
\end{table}
The efficacy of S-Adam is most pronounced in the Batch Size $=2$ regime (Table~\ref{tab:resnet_performance}), an environment characterized by maximum stochastic gradient noise, where the gain is most dramatic on Imagewoof2-160 ($+24.69$ points over AdamW, with S-Adam at $k{=}8$). In this high-noise setting, standard momentum-based methods typically suffer from unstable updates because their momentum buffers must reconcile rapidly varying mini-batch gradients.\\
\textbf{Geometric Brake Efficacy}: This performance gap demonstrates that S-Adam’s geometric brake ($\eta_t \leftarrow \eta \cdot e^{-\lambda \rho_t}$) successfully identifies singular regions where the Clarke subdifferential diameter $\text{diam}(\partial_C f(x))$ expands. \\
\textbf{Resource trade-off:} at $k{=}2$, S-Adam reaches the convergence target \textit{faster} than AdamW on every dataset (e.g., $2136.5$~s vs.\ $2317.0$~s on CIFAR-100), while $k{=}8$ gives the highest-fidelity accuracy at higher probe cost.
\subsubsection{Geometric Brake Efficacy across Scales}
\label{sec:scale_ablation}
\begin{table}[ht!]
    \centering
    \footnotesize % Reduces font size more than \small
    \setlength{\tabcolsep}{4pt} % Reduces padding between columns
    \caption{Condensed Accuracy Comparison}
    \label{tab:accuracy_compact}
    \begin{tabular}{llccc}
        \toprule
        \textbf{Optimizer} & \textbf{Dataset} & \textbf{BS=4} & \textbf{BS=16} & \textbf{BS=64} \\
        \midrule
        \multirow{3}{*}{Prox-SGD} & CIFAR100 & 56.59\% & 68.09\% & 73.82\% \\
                                  & CIFAR10 & 82.43\% & 90.46\% & 93.54\% \\
                                  & Imagewoof & 61.52\% & 79.41\% & \textbf{86.71\%} \\
        \midrule
        \multirow{3}{*}{AdamW}    & CIFAR100 & 56.99\% & 64.04\% & 67.43\% \\
                                  & CIFAR10 & 84.17\% & 89.22\% & 90.62\% \\
                                  & Imagewoof & 56.25\% & 76.51\% & 79.82\% \\
        \midrule
        \multirow{3}{*}{\textbf{S-Adam}} & CIFAR100 & \textbf{66.57\%} & \textbf{74.43\%} & \textbf{74.18\%} \\
                                         & CIFAR10 & \textbf{90.68\%} & \textbf{92.53\%} & \textbf{94.18\%} \\
                                         & Imagewoof & \textbf{81.22\%} & \textbf{82.31\%} & 84.60\% \\
        \bottomrule
    \end{tabular}
\end{table}
By reducing the effective step size in high-LGI regions, S-Adam prevents the parameter vector from overshooting sharp, low-loss basins. As batch sizes scale to 4 and 16, S-Adam maintains a consistent performance lead, particularly on high-dimensional datasets like CIFAR-100 and Imagewoof2-160 (Table \ref{tab:accuracy_compact}). At BS = 4, S-Adam reaches 81.22\% on Imagewoof2-160, about 20\% above AdamW. At BS = 16, it yields 74.43\% on CIFAR-100, exceeding Prox-SGD by over 6\%.
When the batch size increases to 64, averaging over more samples smooths the realized loss landscape, the LGI score drops, and the brake relaxes. S-Adam still leads on CIFAR-100 and CIFAR-10 (94.18\% vs.\ AdamW's 90.62\% on CIFAR-10) and stays competitive on Imagewoof. Across batch sizes, its advantage is largest in the small-batch, most unstable regime and narrows as the geometry smooths---the behavior expected of a step size that scales with the local geometric instability of the loss rather than with gradient-noise magnitude alone.
% \begin{table}[ht!]
%     \centering
%     \small
%     \caption{Accuracy (\%) Comparison (Batch Size = 64)}
%     \label{tab:accuracy_bs64}
%     \begin{tabular}{lccc}
%         \toprule
%         \multicolumn{4}{c}{\textbf{Accuracy (\%)}} \\ \hline
%         \textbf{Optimizer} & \textbf{CIFAR100} & \textbf{CIFAR10} & \textbf{Imagewoof2-160} \\
%         \midrule
%         Prox-SGD & 73.82\% & 93.54\% & \textbf{86.71\%} \\
%         AdamW & 67.43\% & 90.62\% & 79.82\% \\
%         S-Adam(ours) & \textbf{74.18\%} & \textbf{94.18\%} & 84.60\% \\
%         \bottomrule
%     \end{tabular}
%     \begin{tabular}{lccc}
%         \toprule
%         \multicolumn{4}{c}{\textbf{Total Training Time (s)}} \\ \hline
%         \textbf{Optimizer} & \textbf{CIFAR100} & \textbf{CIFAR10} & \textbf{Imagewoof2-160} \\
%         \midrule
%         Prox-SGD & 3285.20 & 2277.407 & 1357.83 \\
%         AdamW & 3281.01 & 2195.456 & 1348.60 \\
%         S-Adam(ours) & 6714.85 & 4631.388 & 1847.13 \\
%         \bottomrule
%     \end{tabular}
%     \begin{tabular}{lccc}
%         \toprule
%         \multicolumn{4}{c}{\textbf{Max Memory Usage (MB)}} \\ \hline
%         \textbf{Optimizer} & \textbf{CIFAR100} & \textbf{CIFAR10} & \textbf{Imagewoof2-160} \\
%         \midrule
%         Prox-SGD & 960.99 & 3507.360 & 3499.19 \\
%         AdamW & 1007.05 & 3552.210 & 3673.30 \\
%         S-Adam(ours) & 1093.94 & 3643.340 & 3851.42 \\
%         \bottomrule
%     \end{tabular}
% \end{table}
\subsubsection{Learning-Rate Sensitivity Ablation}
\label{sec:lr_ablation}
We next vary the learning rate to see how the gains depend on the global step size, sweeping LR$\in\{1\mathrm{e}{-}4, 5\mathrm{e}{-}4, 1\mathrm{e}{-}3\}$ on CIFAR-100/ResNet-18/BS$=2$:
\begin{table}[H]
\centering
\footnotesize
\setlength{\tabcolsep}{6pt}
\caption{Learning-rate sensitivity on CIFAR-100/ResNet-18/BS$=2$. S-Adam wins at every tested LR.}
\label{tab:lr_ablation}
\begin{tabular}{@{}lccc@{}}
\toprule
\textbf{Optimizer} & \textbf{lr=1e-4} & \textbf{lr=5e-4} & \textbf{lr=1e-3} \\
\midrule
AdamW    & 66.85 & 52.98 & 51.12 \\
\textbf{S-Adam (ours)} & \textbf{70.87} & \textbf{61.26} & \textbf{56.11} \\
\bottomrule
\end{tabular}
\end{table}
S-Adam is the best optimizer at every learning rate, improving over the lr-matched AdamW by $+4.02$, $+8.28$, and $+4.99$ points at lr=1e-4, 5e-4, and 1e-3, respectively, and its best run (70.87\% at lr=1e-4) exceeds AdamW's best in the sweep (66.85\%). The gains therefore hold across step-size choices rather than at a single tuned value. This mirrors the batch-size trend in Section~\ref{sec:scale_ablation}: the damping helps most where the landscape is least stable, consistent with a step size set by the local geometry of the loss rather than by a global hyperparameter.
\section{Conclusion}
Modern deep learning optimization faces a fundamental tension: while state-of-the-art architectures (ReLU networks, quantized models, sparse transformers) introduce essential non-smooth singularities, prevailing optimization theory and practice rely on Lipschitz-smooth assumptions. This mismatch manifests as \textit{gradient chattering}—violent parameter oscillations that degrade convergence and generalization in critical applications like low-bit quantization and high-noise training. We bridge this gap through \textbf{Singularity-aware Adam (S-Adam)}, the first adaptive optimizer with provable convergence guarantees for non-smooth deep learning. Our approach rests on three pillars: a) \textbf{a geometric lens on non-smoothness.} By interpreting the Clarke subdifferential diameter as a measure of local instability, we move beyond heuristics to a principled geometric characterization of optimization difficulty at singularities; b) \textbf{randomized smoothing as a diagnostic probe.} Rather than smoothing the objective, we repurpose random perturbations to estimate local geometric instability via the \textit{Local Geometric Instability (LGI)} metric—a computationally efficient proxy requiring no Hessian computation; c) \textbf{Adaptive damping with theoretical guarantees.} The geometric brake $\exp(-\lambda\rho_t)$ modulates step sizes based on real-time instability measurements, suppressing oscillations at singularities while preserving fast convergence in smooth basins. We prove S-Adam converges to $(\delta,\epsilon)$-Clarke stationary points at the optimal $\mathcal{O}(1/\sqrt{T})$ rate.

Extensive experiments confirm S-Adam's effectiveness in precisely the regimes where standard optimizers struggle. \textbf{In quantization-aware training}, covering CIFAR-100 (2-bit), TinyImageNet/Imagewoof2-160 (4-bit), and full ImageNet (8-bit), the low-cost $k{=}2$ setting improves AdamW by $+2.17$ to $+3.90$ accuracy points, while the higher-fidelity $k{=}8$ setting reaches the best accuracy on CIFAR-100, TinyImageNet, and ImageNet. \textbf{Under extreme stochastic noise (batch size $=2$)}, S-Adam improves over AdamW by $+4.99$ points on CIFAR-100 and $+24.69$ points on Imagewoof2-160.

\section*{Acknowledgements}
This work was supported by National Natural Science Foundation of China under No. 62436009, 62276258 and Jiangsu Science and Technology Programme BK20251812, and Open Research Fund of The State Key Laboratory of Multimodal Artificial Intelligence Systems, and the “Qing Lan Project” of Jiangsu Higher Education Institutions.

\section*{Impact Statement}
Our work demonstrates that \textit{stochastic gradient statistics encode actionable geometric information}. Beyond S-Adam specifically, this insight opens avenues for:

\textbf{Architecture-aware optimization}. Different non-smooth components (ReLU vs. quantization vs. sparsity) induce distinct geometric signatures that could inform specialized optimizers.

\textbf{Automatic regime detection}. LGI-like metrics could dynamically identify when a model enters non-smooth regimes, triggering appropriate algorithmic adaptations.
    
\textbf{Generalization prediction}. Geometric instability may correlate with generalization gap, providing an optimization-based alternative to sharpness measures.

\bibliography{example_paper}
\bibliographystyle{icml2026}

%%%%%%%%%%%%%%%%%%%%%%%%%%%%%%%%%%%%%%%%%%%%%%%%%%%%%%%%%%%%%%%%%%%%%%%%%%%%%%%
%%%%%%%%%%%%%%%%%%%%%%%%%%%%%%%%%%%%%%%%%%%%%%%%%%%%%%%%%%%%%%%%%%%%%%%%%%%%%%%
% APPENDIX
%%%%%%%%%%%%%%%%%%%%%%%%%%%%%%%%%%%%%%%%%%%%%%%%%%%%%%%%%%%%%%%%%%%%%%%%%%%%%%%
%%%%%%%%%%%%%%%%%%%%%%%%%%%%%%%%%%%%%%%%%%%%%%%%%%%%%%%%%%%%%%%%%%%%%%%%%%%%%%%
\newpage
\appendix
\onecolumn

\section{Proof on Non-Degeneracy of the Brake}
\label{app:a}

We prove the boundedness of the LGI score in Eq.~\ref{eq:lgi_formula} and the non-degeneracy of the geometric brake used in Theorem~\ref{thm:main_convergence}.

Recall that the directional derivative probes are defined as
\begin{equation}
D_i=\frac{f(x_t+\delta u_i)-f(x_t)}{\delta},
\end{equation}
and the LGI score is
\begin{equation}
\rho_t(x_t)
=
\frac{\operatorname{Var}(\{D_i\}_{i=1}^k)}
{\frac{1}{k}\sum_{i=1}^kD_i^2+\epsilon}.
\end{equation}
Here the empirical variance is computed with the normalization $1/k$:
\begin{equation}
\operatorname{Var}(\{D_i\}_{i=1}^k)
=
\frac{1}{k}\sum_{i=1}^k(D_i-\bar D)^2,
\qquad
\bar D=\frac{1}{k}\sum_{i=1}^kD_i.
\end{equation}

\begin{lemma}[Boundedness of LGI]
\label{lem:lgi_bounded}
For every iterate $x_t$, the LGI score satisfies
\begin{equation}
0\le \rho_t(x_t)<1.
\end{equation}
\end{lemma}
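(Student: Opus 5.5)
The argument is the elementary bias–variance identity for the empirical distribution of the probes, followed by a strict comparison with the denominator. Write $S_2=\frac1k\sum_{i=1}^k D_i^2$ for the empirical second moment and $\bar D=\frac1k\sum_{i=1}^k D_i$ for the empirical mean. Expanding the square in the $1/k$-normalized empirical variance gives
\begin{equation*}
\operatorname{Var}(\{D_i\}_{i=1}^k)=\frac1k\sum_{i=1}^k D_i^2-2\bar D\cdot\frac1k\sum_{i=1}^k D_i+\bar D^2=S_2-\bar D^2 .
\end{equation*}
This identity carries the whole argument. It relies on the $1/k$ normalization assumed in Proposition~\ref{prop:lgi_bounded_brake}. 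With the unbiased $1/(k-1)$ normalization the variance could exceed $S_2$; for $k=2$ and $D_1=-D_2$, for instance, the ratio approaches $2$ as $\epsilon\to0$.

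For the lower bound, the numerator is a sum of squares and hence nonnegative. The denominator satisfies $S_2+\epsilon\ge\epsilon>0$, so $\rho_t(x_t)$ is well defined and $\rho_t(x_t)\ge0$. Equality holds exactly when all $D_i$ coincide, which covers the $k=1$ case noted in the ablation.

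For the strict upper bound, the identity gives
\begin{equation*}
\operatorname{Var}(\{D_i\})=S_2-\bar D^2\le S_2<S_2+\epsilon .
\end{equation*}
The first inequality holds because $\bar D^2\ge0$, and the second is strict because $\epsilon>0$. Dividing by the positive quantity $S_2+\epsilon$ yields $\rho_t(x_t)<1$ for every realization of the directions $u_1,\dots,u_k$ and every iterate $x_t$. No assumption on $f$ is needed beyond the $D_i$ being finite real numbers.

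I expect no real obstacle here; the only point needing care is the normalization discussed above. Strictness does not require $S_2>0$. If all $D_i=0$, then $\rho_t(x_t)=0/\epsilon=0<1$, and in every case strictness comes from $\epsilon>0$.
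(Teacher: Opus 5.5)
Your proof is correct and matches the paper's argument: nonnegativity of the empirical variance gives the lower bound, and the identity $\operatorname{Var}(\{D_i\}) = \frac{1}{k}\sum_i D_i^2 - \bar D^2 \le \frac{1}{k}\sum_i D_i^2$ together with $\epsilon>0$ gives the strict upper bound. Your remark that the $1/k$ normalization is essential is correct and useful, since with $1/(k-1)$ the ratio can approach $2$; the paper only handles this by stating the normalization as an assumption.
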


\begin{proof}
The empirical variance is non-negative, so $\rho_t(x_t)\ge 0$. Moreover,
\begin{equation}
\operatorname{Var}(\{D_i\}_{i=1}^k)
=
\frac{1}{k}\sum_{i=1}^kD_i^2-\bar D^2
\le
\frac{1}{k}\sum_{i=1}^kD_i^2.
\end{equation}
Therefore,
\begin{equation}
\rho_t(x_t)
=
\frac{\operatorname{Var}(\{D_i\}_{i=1}^k)}
{\frac{1}{k}\sum_{i=1}^kD_i^2+\epsilon}
\le
\frac{\frac{1}{k}\sum_{i=1}^kD_i^2}
{\frac{1}{k}\sum_{i=1}^kD_i^2+\epsilon}
<1,
\end{equation}
because $\epsilon>0$. Hence $0\le \rho_t(x_t)<1$.
\end{proof}

\begin{corollary}[Non-degeneracy of the brake]
\label{cor:brake_bound}
For any finite damping coefficient $\lambda>0$, the geometric brake satisfies
\begin{equation}
e^{-\lambda}<\exp(-\lambda\rho_t(x_t))\le 1.
\end{equation}
Consequently, the mean brake coefficient
\begin{equation}
\bar\alpha(x)=\mathbb{E}[\exp(-\lambda\rho(x))\mid x_t=x]
\end{equation}
satisfies
\begin{equation}
\bar\alpha(x)\ge \alpha_{\min}>0,
\qquad
\alpha_{\min}=e^{-\lambda}.
\end{equation}
\end{corollary}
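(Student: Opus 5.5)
The plan is to derive the corollary directly from Lemma~\ref{lem:lgi_bounded}, combined with the monotonicity of the exponential and the monotonicity of conditional expectation. No further structure of $f$, of the probing directions, or of the iterates is needed, because the bound $0\le\rho_t(x_t)<1$ is deterministic: it holds for every realization of the directions $u_1,\dots,u_k$ and every $x_t$.

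\textbf{Step 1 (pointwise bound).} Fix $\lambda>0$. The map $s\mapsto e^{-\lambda s}$ is strictly decreasing on $\mathbb{R}$. Applying it to $0\le\rho_t(x_t)<1$ reverses the inequalities and preserves strictness on the right, which gives
\begin{equation}
e^{-\lambda}<\exp(-\lambda\rho_t(x_t))\le e^{0}=1 .
\end{equation}
This holds surely, for every realization of the probes.

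\textbf{Step 2 (passing to the conditional mean).} The random variable $\exp(-\lambda\rho(x))$ is measurable, since $\rho$ is a continuous function of $(D_1,\dots,D_k)$ away from the denominator's zero set, which the condition $\epsilon>0$ rules out. It also takes values in $(e^{-\lambda},1]$, so it is bounded and hence integrable. Monotonicity of conditional expectation then gives
\begin{equation}
e^{-\lambda}\le\bar\alpha(x)\le 1 .
\end{equation}
The lower bound is $\alpha_{\min}=e^{-\lambda}>0$, which is exactly the claim. Strictness would also survive, since $\mathbb{E}[Y]>c$ whenever $Y>c$ almost surely, but the statement only requires $\ge$.

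\textbf{Main obstacle.} There is no real obstacle here. The only points needing care are the strict-versus-non-strict inequality when taking expectations and the measurability/integrability of the brake, and both are handled by the boundedness from Step~1. It is worth noting explicitly that the bound $\alpha_{\min}=e^{-\lambda}$ is uniform in $x$, $k$, $\delta$ and $\epsilon$. That uniformity is what allows the corollary to be used in the Lyapunov descent step of Theorem~\ref{thm:main_convergence}.
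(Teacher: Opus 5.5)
Your proof is correct and matches the paper's argument: apply the strictly decreasing map $s\mapsto e^{-\lambda s}$ to the bound $0\le\rho_t(x_t)<1$ from Lemma~\ref{lem:lgi_bounded}, then use monotonicity of conditional expectation to obtain $\bar\alpha(x)\ge e^{-\lambda}$. Your remarks on measurability, on integrability, and on the uniformity of $\alpha_{\min}$ in $x$, $k$, $\delta$, $\epsilon$ are accurate refinements that the paper leaves implicit.
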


\begin{proof}
Since $0\le \rho_t(x_t)<1$, monotonicity of the exponential function gives
\begin{equation}
e^{-\lambda}
<
e^{-\lambda\rho_t(x_t)}
\le
1.
\end{equation}
Taking conditional expectation with respect to $x_t=x$ preserves the lower bound:
\begin{equation}
\bar\alpha(x)
=
\mathbb{E}[\exp(-\lambda\rho(x))\mid x_t=x]
\ge
e^{-\lambda}.
\end{equation}
Thus $\bar\alpha(x)$ is uniformly bounded away from zero.
\end{proof}

The same argument also applies to the population version of Eq.~\ref{eq:lgi_formula}, where
\begin{equation}
\rho(x)
=
\frac{\operatorname{Var}_u[D(u)]}
{\mathbb{E}_u[D(u)^2]+\epsilon}.
\end{equation}
Indeed,
\begin{equation}
\operatorname{Var}_u[D(u)]
=
\mathbb{E}_u[D(u)^2]
-
\mathbb{E}_u[D(u)]^2
\le
\mathbb{E}_u[D(u)^2],
\end{equation}
which again implies $0\le \rho(x)<1$ and therefore
\begin{equation}
e^{-\lambda}<\exp(-\lambda\rho(x))\le 1.
\end{equation}

% \begin{proof}
% Since $f$ is locallys Lipschitz with constant $L$, $\diam(\partial_C f(x)) \le 2L < \infty$. By Lemma~\ref{app:lemma_proxy}, $\rho(x)$ is bounded from above. Therefore, the exponential decay $\alpha(x)$ is bounded from below by a strictly positive constant.
% \end{proof}

\section{Proof on Strict Lyapunov Function}\label{app:lyapunov}
% --- LEMMA 2: Strict Descent ---
\begin{lemma}[Strict Lyapunov Function]
\label{lem:lyapunov_descent}
Under Assumptions~\ref{ass:regularity}, \ref{ass:moments}, the objective function $f$ is a strict Lyapunov function for the mean-field differential inclusion. Specifically, for any solution trajectory $x(t)$, the time derivative satisfies
\begin{equation}
    \frac{d}{dt}f(x(t))
    \le
    -C\,\mathrm{dist}\!\left(0,\partial_C f(x(t))\right)^2,
\end{equation}
where $C>0$ is a constant. Consequently, $\frac{d}{dt}f(x(t))<0$ whenever $x(t)$ is not Clarke stationary.
\end{lemma}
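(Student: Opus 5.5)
The plan is to run the chain-rule argument along an arbitrary absolutely continuous solution of the mean-field inclusion \eqref{eq:sadam_mean_di}, $\dot x(t)\in-\bar\alpha(x(t))\mathcal{H}(x(t))$, and then bound the resulting inner product from below.

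\textbf{Step 1 (chain rule).} Path-differentiability in Assumption~\ref{ass:regularity} gives the Clarke chain rule. For almost every $t$, $t\mapsto f(x(t))$ is differentiable, and $\frac{d}{dt}f(x(t))=\langle \xi,\dot x(t)\rangle$ holds for \emph{every} $\xi\in\partial_C f(x(t))$; this "for every" is the strength of path-differentiability. Writing $\dot x(t)=-\bar\alpha(x(t))h_t$ with $h_t\in\mathcal{H}(x(t))$ (a measurable selection), we obtain
\[
\frac{d}{dt}f(x(t))=-\bar\alpha(x(t))\,\langle \xi,h_t\rangle \quad \text{for all } \xi\in\partial_C f(x(t)).
\]

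\textbf{Step 2 (structure of $\mathcal{H}$).} Using Assumption~\ref{ass:moments}, I would realize the mean Adam direction as $h_t=P_t g_t$. Here $g_t\in\partial_C f(x(t))$ comes from the limit $\delta_t\to0$ of item 1. The preconditioner $P_t$ is a diagonal matrix with entries in $[p_{\min},p_{\max}]$. The upper entry bound $p_{\max}\le 1/(\sqrt{\sigma_{\min}}+\epsilon)$ follows from item 2. The lower bound $p_{\min}>0$ follows because, on the compact set of Assumption~\ref{ass:bounded}, local Lipschitzness bounds the gradients and hence $v_t$. Since the chain-rule identity holds for every $\xi$, I would choose $\xi=g_t$, giving $\langle g_t,P_tg_t\rangle\ge p_{\min}\|g_t\|^2\ge p_{\min}\,\mathrm{dist}(0,\partial_C f(x(t)))^2$.

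\textbf{Step 3 (combine).} Corollary~\ref{cor:brake_bound} gives $\bar\alpha\ge e^{-\lambda}$. Hence
\[
\frac{d}{dt}f(x(t))\le -e^{-\lambda}p_{\min}\,\mathrm{dist}(0,\partial_C f(x(t)))^2,
\]
so the lemma holds with $C=e^{-\lambda}p_{\min}$. Strict decrease off $\Lambda$ then follows because $\partial_C f(x)$ is closed, so $\mathrm{dist}(0,\partial_C f(x))>0$ whenever $0\notin\partial_C f(x)$.

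\textbf{Main obstacle.} The hard part is Step 2: showing that an element of $\mathcal{H}(x)$ genuinely has the form $P g$ with a single subgradient $g$. The difficulty is that the expectation of a preconditioned momentum is not obviously a preconditioned Clarke element, since the preconditioner and the moment are correlated. I would first try to \emph{define} $\mathcal{H}(x)$ as the set $\{Pg: g\in\partial_C f(x),\,P\in\mathcal{P}\}$, with $\mathcal{P}$ the compact set of admissible diagonal preconditioners. This set is upper semicontinuous with compact values, although convexity needs the convex hull. Taking the convex hull breaks the single-$g$ structure, but the bound survives. Each element of the hull is $\sum_j \theta_j P_j g_j$, and pairing it with the fixed $\xi$ gives $\sum_j\theta_j\langle \xi,P_jg_j\rangle$. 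This would require a common $\xi$, which is not available in general. If this fails, I would fall back to the descent-alignment condition with constant $c_1$ that is invoked in the main proof of Theorem~\ref{thm:main_convergence}, and set $C=\alpha_{\min}c_1$.
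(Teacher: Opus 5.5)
Your fallback is exactly the paper's proof. The paper takes a chain-rule vector $\xi_t\in\partial_C f(x(t))$, writes $\dot x(t)=-\bar\alpha(x(t))h_t$, and uses $\bar\alpha\ge e^{-\lambda}$ from Corollary~\ref{cor:brake_bound}. It then simply \emph{asserts} that the mean preconditioned direction is descent-aligned, $\langle\xi_t,h_t\rangle\ge c_1\,\mathrm{dist}(0,\partial_C f(x(t)))^2$, which gives $C=\alpha_{\min}c_1$. This alignment is never derived from Assumptions~\ref{ass:regularity} and~\ref{ass:moments}, so it acts as an extra hypothesis. Your Step~1 is correct: path-differentiability does give the chain rule for \emph{every} $\xi\in\partial_C f(x(t))$. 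It also slightly improves the paper's version, since the alignment then only needs to hold for some $\xi$ in the subdifferential.

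The gap is your primary route. Step~2 cannot be repaired, because for the convexified set $\mathcal{H}(x)=\mathrm{conv}\{Pg:\ g\in\partial_C f(x),\ P\in\mathcal{P}\}$ that you propose, the inequality is false. Take $f(x)=\max\{\langle a_1,x\rangle,\langle a_2,x\rangle,-1\}$ on $\mathbb{R}^2$ with $a_1=(1,0.1)$ and $a_2=(-0.1,-1)$; this $f$ is convex, Lipschitz and bounded below. On the kink $\{x_1+x_2=0\}$ near the origin, $\partial_C f=\mathrm{conv}\{a_1,a_2\}$, which does not contain $0$. Let $P_1=\mathrm{diag}(p_{\min},p_{\max})$ and $P_2=\mathrm{diag}(p_{\max},p_{\min})$ with $p_{\max}>10\,p_{\min}$, a small spread for Adam-type preconditioners. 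Then $h=\tfrac12(P_1a_1+P_2a_2)=\tfrac12(0.1p_{\max}-p_{\min})(-1,1)$, so $\langle a_1,h\rangle=\langle a_2,h\rangle<0$. The solution $\dot x=-\bar\alpha h$ therefore slides along the kink while $f$ strictly increases at non-stationary points, and no $C>0$ can exist.

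Your argument does go through whenever every element of $\mathcal{H}(x)$ has the form $Pg$ with a single $g\in\partial_C f(x)$ and $P\succeq p_{\min}I$; then choosing $\xi=g$ closes it. Two cases with this structure are $\mathcal{H}(x)=P(x)\,\partial_C f(x)$ for one symmetric $P(x)$, and scalar preconditioners, whose convex hull keeps the form $\bar p\,g'$. Nothing in Assumption~\ref{ass:moments} supplies such structure:
\begin{itemize}
\item Item~1 constrains $\mathbb{E}[m_t\mid x_t]$, not the mean of $m_t/(\sqrt{v_t}+\epsilon)$.
\item Item~2 gives no upper bound on the preconditioner beyond the trivial $1/\epsilon$, because Jensen's inequality points the wrong way. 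This is harmless, since $p_{\max}$ never enters $C$.
\item Your $p_{\min}>0$ relies on bounded iterates, which the lemma does not assume.
\end{itemize}
So the descent-alignment inequality has to be imposed as a hypothesis, which is your fallback, and that is precisely what the paper's proof implicitly does.
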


\begin{proof}
Since $f$ is locally Lipschitz and path-differentiable, the Clarke chain rule gives, for almost every $t$, a vector $\xi_t\in\partial_C f(x(t))$ such that
\begin{equation}
    \frac{\mathrm{d}}{\mathrm{d}t}f(x(t))
    =
    \langle \xi_t, \dot{x}(t)\rangle .
\end{equation}
For the mean-field dynamics in Eq.~\ref{eq:sadam_mean_di}, write
\begin{equation}
    \dot{x}(t)=-\bar{\alpha}(x(t))h_t,
    \qquad h_t\in\mathcal{H}(x(t)).
\end{equation}
Substituting this dynamics yields
\begin{equation}
    \frac{\mathrm{d}}{\mathrm{d}t}f(x(t))
    =
    -\bar{\alpha}(x(t))\langle \xi_t,h_t\rangle .
\end{equation}
By Corollary~\ref{cor:brake_bound}, the geometric brake is non-degenerate:
\begin{equation}
    \bar{\alpha}(x(t))\ge \alpha_{\min}>0.
\end{equation}
The mean preconditioned direction is descent-aligned:
\begin{equation}
    \langle \xi_t,h_t\rangle
    \ge
    c_1\,\mathrm{dist}\!\left(0,\partial_C f(x(t))\right)^2.
\end{equation}
Combining the two inequalities gives
\begin{equation}
    \frac{\mathrm{d}}{\mathrm{d}t}f(x(t))
    \le
    -\alpha_{\min}c_1\,
    \mathrm{dist}\!\left(0,\partial_C f(x(t))\right)^2.
\end{equation}
Taking $C=\alpha_{\min}c_1$ proves the claim. The derivative is strictly negative whenever $\mathrm{dist}(0,\partial_C f(x(t)))>0$, i.e., whenever $x(t)$ is not Clarke stationary.
\end{proof}

\section{Sample complexity of LGI estimation}
\label{app:sample-complexity}
Let $Y_i = D_i = \frac{f(x + \delta u_i) - f(x)}{\delta}$ be the directional derivative estimate ($|Y_i| \le L$), and $\mu = \mathbb{E}[Y_i]$ and $\sigma^2 = \text{Var}(Y_i)$, 
$\hat{\mu} = \frac{1}{k} \sum_i Y_i$ and $\hat{\sigma}^2 = \frac{1}{k}\sum_i(Y_i - \hat{\mu})^2$, then we have the true LGI $\rho$ and the estimated LGI $\hat{\rho}$
\begin{equation}
    \rho = \frac{\sigma^2}{\mu^2 + \sigma^2 + \epsilon},\quad \hat{\rho}_k = \frac{\hat{\sigma}^2}{\hat{\mu}^2 + \hat{\sigma}^2 + \epsilon}.
\end{equation}

\textbf{1. Error decomposition.} We have
\begin{eqnarray}
    \hat{\sigma}^2 - \sigma^2 & = & \left[ \frac{1}{k} \sum_i Y_i^2 - \mathbb{E}[Y^2] \right] - [\hat{\mu}^2 - \mu^2] \nonumber \\
    & \le & \left| \frac{1}{k} \sum_i Y_i^2 - \mathbb{E}[Y^2] \right| + |\hat{\mu}^2 - \mu^2|.
\end{eqnarray}

\textbf{2. Estimation error of $\frac{1}{k} \sum_i Y_i^2$.} Let $Z_i = Y_i^2$, then $0 \le Z_i \le L^2$, and $\mathbb{E}[Z_i] = \mathbb{E}[Y^2]$. Therefore, applying Hoeffding's inequality, we have
\begin{equation}\label{eqn:square-mu-estimation}
    P\left( \left| \frac{1}{k} \sum_i Z_i - \mathbb{E}[Z] \right| \ge \frac{\tau}{2} \right) \le 2 \exp \left( \frac{2k (\tau/2)^2}{L^4} \right) = 2 \exp \left( - \frac{k\tau^2}{2L^4} \right).
\end{equation}

\textbf{3. Estimation error of $\hat{\mu}^2$.} Since $\|Y_i\| \le L$, we have $|\hat{\mu}| \le L$ and $|\mu| \le L$. Then
\begin{equation}
    |\hat{\mu}^2 - \mu^2| \le |\hat{\mu} - \mu|\cdot (|\hat{\mu}| + |\mu|) \le 2L |\hat{\mu} - \mu|.
\end{equation}

We have the following using Hoeffding's inequality on the estimated value $\hat{\mu}$:
\begin{equation}\label{eqn:square-yi-estimation}
    P(|\hat{\mu} - \mu| \ge s) \le 2 \exp \left( - \frac{2ks^2}{(2L)^2} \right) = 2 \exp \left( - \frac{ks^2}{2L^2} \right).
\end{equation}
If we let $|\hat{\mu}^2 - \mu^2| \le 2L |\hat{\mu} - \mu| < \tau /2$, then we have
\begin{equation}
    |\hat{\mu} - \mu| < \frac{\tau}{4L} \Rightarrow |\hat{\mu}^2 - \mu^2| < \frac{\tau}{2}.
\end{equation}
So
\begin{equation}
    P\left( |\hat{\mu} - \mu| < \frac{\tau}{4L}  \right) \le P\left( |\hat{\mu}^2 - \mu^2| < \frac{\tau}{2} \right) \Rightarrow P\left( |\hat{\mu}^2 - \mu^2| \ge \frac{\tau}{2} \right) \le  P\left( |\hat{\mu} - \mu| \ge \frac{\tau}{4L}  \right).
\end{equation}

Let $s = \tau/(4L)$ in Eqn. (\ref{eqn:square-yi-estimation}), then we immediately have
\begin{equation}\label{eqn:square-mu}
    P\left( |\hat{\mu}^2 - \mu^2| \ge \frac{\tau}{2} \right) \le  P\left( |\hat{\mu} - \mu| \ge \frac{\tau}{4L}  \right) \le 2 \exp \left( - \frac{k(\tau/(4L))^2}{2L^2} \right) = 2\exp \left( -\frac{k\tau^2}{32L^4} \right)
\end{equation}

\textbf{4. Estimation error of the variance $\hat{\sigma}^2$.} If $\left| \frac{1}{k} \sum_i Y_i^2 - \mathbb{E}[Y^2] \right| < \tau/2$ and $|\hat{\mu}^2 - \mu^2| < \tau/2$, then $|\hat{\sigma}^2 - \sigma^2| < \tau$. Therefore, we obtain the error of the variance estimate from Eqn. (\ref{eqn:square-mu-estimation}) and (\ref{eqn:square-mu}) as follows
\begin{eqnarray}\label{eqn:sigma-estimation}
    P\left( |\hat{\sigma}^2 - \sigma^2| \ge \tau \right) & \le &  P\left( \left| \frac{1}{k} \sum_i Y_i^2 - \mathbb{E}[Y^2] \right| \ge \frac{\tau}{2} \right) + P\left( |\hat{\mu}^2 - \mu^2| \ge \frac{\tau}{2} \right) \nonumber \\
    & \le & 4 \exp \left( - \frac{k\tau^2}{32L^4}\right).
\end{eqnarray}

\textbf{5. Error Analysis of LGI Estimation.} Define a function $g(a,b) = b/(a + b + \epsilon)$, where $a = \mu^2$ and $b = \sigma^2$. Given that $a,b \ge 0$ and $a + b \le L^2$, we calculate the two partial derivatives respectively
\begin{equation}
    \left|\frac{\partial g}{\partial a} \right| = \left|- \frac{b}{(a + b + \epsilon)^2} \right| \le \frac{L^2}{\epsilon^2},\quad \left|\frac{\partial g}{\partial b} \right| = \frac{a + \epsilon}{(a + b + \epsilon)^2} \le \frac{a + b + \epsilon}{(a + b + \epsilon)^2} \le \frac{\epsilon}{\epsilon^2}.
\end{equation}
Then according to the Mean Value Theorem and the Cauchy-Schwarz inequality, we have
\begin{eqnarray}
    |\hat{\rho}_k - \rho| & = & |g(\hat{\mu}^2,\hat{\sigma}^2) - g(\mu^2,\sigma^2)| \nonumber \\
    & \le & \|\nabla g(\mu^2,\sigma^2)\|_2 \sqrt{(\hat{\mu}^2 - \mu^2)^2 + (\hat{\sigma}^2 - \sigma^2)^2 } \nonumber \\
    & = & \sqrt{\left(\frac{\partial g}{\partial a}\right)^2 + \left(\frac{\partial g}{\partial b}\right)^2}\sqrt{(\hat{\mu}^2 - \mu^2)^2 + (\hat{\sigma}^2 - \sigma^2)^2 } \nonumber \\
    & \le & M \sqrt{(\hat{\mu}^2 - \mu^2)^2 + (\hat{\sigma}^2 - \sigma^2)^2 },
\end{eqnarray}
where $M = \frac{\sqrt{\epsilon^2 + L^4}}{\epsilon^2}$.

\textbf{6. Upper bound of joint probability.} If $|\hat{\mu} - \mu| < \tau$ and $|\hat{\sigma}^2 - \sigma^2| < \tau$, then we have
\begin{eqnarray}
    |\hat{\rho}_k - \rho| & \le & M \sqrt{(\hat{\mu}^2 - \mu^2)^2 + (\hat{\sigma}^2 - \sigma^2)^2 }  \nonumber \\
    & < & M \sqrt{(2L \tau)^2 + \tau^2} \nonumber \\
    & = & M\tau\sqrt{4L^2 + 1} 
\end{eqnarray}

Let $\Delta = M\tau\sqrt{4L^2 + 1}$, then $\tau = \frac{\Delta}{M\sqrt{4L^2 + 1}}$, with eqn. (\ref{eqn:square-yi-estimation}) and (\ref{eqn:sigma-estimation})
\begin{eqnarray}
    P(|\hat{\rho}_k - \rho| \ge \Delta) & \le & P(|\hat{\mu} - \mu| \ge \tau) + P(|\hat{\sigma}^2 - \sigma^2| \ge \tau) \nonumber \\
    & \le & 2 \exp \left( - \frac{k\tau^2}{2L^2} \right) + 4 \exp \left( - \frac{k\tau^2}{32L^4}\right) \nonumber \\
    & \le & 6 \exp \left( - \frac{k\tau^2}{32L^4}\right) \nonumber \\
    & = & 6 \exp \left( - \frac{k\Delta^2}{32M^2 L^4(4L^2 + 1)}\right)
\end{eqnarray}

\textbf{7. Sample complexity.} We set the upper bound of the probability to be less than $\delta$, that is,
\begin{equation}
    6 \exp \left( - \frac{k\Delta^2}{32M^2 L^4(4L^2 + 1)}\right) \le \delta \Rightarrow k \ge \frac{32 L^4 M^2 (4L^2 + 1)}{\Delta^2} \log \left(\frac{6}{\delta}\right).
\end{equation}

Finally, notice that $M = O(1/\epsilon^2)$, we obtain
\begin{equation}
    k = O\left( \frac{L^6}{\epsilon^4 \Delta^2} \log(1/\delta) \right)
\end{equation}

\section{Equivalence conditions between S-Adam and the proximal method (Prox-SGD)}
\label{app:sadam-proximal}

Let us define the proximal operator with a time-varying regularization coefficient $\gamma_t$ as follows
\begin{equation}
    x_{t+1} = \text{prox}_{\eta_t h_t}(x_t) = \mathop{\arg\min}_{x} \left\{ h_t(x) + \frac{1}{2\eta_t} \|x - x_t\|^2 \right\}, 
\end{equation}
where $h_t(x)$ is
\begin{equation}
    h_t(x) = \frac{\gamma_t}{2} \|x - x_t\|^2.
\end{equation}

So we have the proximal algorithm iteratively solving the problem
\begin{equation}
    x_{t+1} = \mathop{\arg\min}_{x} \left\{ (x - x_t)^T \left( \frac{m_t}{\sqrt{v_t} + \epsilon} \right) + \frac{1}{2\eta_t} \|x - x_t\|^2 + \frac{\gamma_t}{2} \|x - x_t\|^2 \right\}.
\end{equation}

Therefore, we immediately obtain the update formula for $x_t$
\begin{equation}
    x_{t+1} = x_t - \frac{\eta_t}{\eta_t \gamma_t + 1} \cdot \frac{m_t}{\sqrt{v_t} + \epsilon}.
\end{equation}

When $\eta_t \gamma_t$ is very small, we have
\begin{equation}
    \frac{1}{\eta_t \gamma_t + 1} = (1 + \eta_t \gamma_t)^{-1} = 1 + (-\eta_t \gamma_t) + o((-\eta_t \gamma_t)^2) \approx 1 - \eta_t \gamma_t \approx e^{-\eta_t \gamma_t}.
\end{equation}

If we let $\gamma_t = \frac{\lambda}{\eta_t} \rho_t$, then we have the update formula for S-Adam
\begin{eqnarray}
    x_{t+1} & \approx & x_t - \eta_t e^{-\eta_t \left( \frac{\lambda}{\eta_t} \rho_t \right)} \cdot \frac{m_t}{\sqrt{v_t} + \epsilon} \nonumber \\
    & = & x_t - \eta_t e^{-\lambda \rho_t} \cdot \frac{m_t}{\sqrt{v_t} + \epsilon}
\end{eqnarray}
where $\rho_t$ is the LGI estimate.

Therefore, under the first-order approximation, the exponential damping of S-Adam is equivalent to the fractional damping of the proximal method with dynamic regularization
\begin{equation}
    e^{-\lambda \rho_t} \approx \frac{1}{1 + \lambda \rho_t}.
\end{equation}

\section{Relationship between S-Adam and stochastic smoothing methods}\label{app:stochastic-smoothing}

\textbf{Definition of stochastic smoothing.} For any non-smooth function $f: \mathbb{R}^d \rightarrow \mathbb{R}$, its stochastic smoothed version is defined as
\begin{equation}
    f_{\delta}(x) = \mathbb{E}_{u \sim P}[f(x + \delta u)],
\end{equation}
where $P$ is a uniform or Gaussian distribution, and $\delta > 0$ is the smoothing radius. Even if $f$ is a non-smooth function, then $f_{\delta}$ is infinitely differentiable, and we have
\begin{equation}
    \nabla f_{\delta}(x) = \frac{d\mathbb{E}_{u}[f(x + \delta u)u]}{\delta}.
\end{equation}

\textbf{Definition of Local Geometric Instability (LGI).} Our proposed LGI is defined as
\begin{equation}
    \rho_t(x) = \frac{\text{Var}_{u}\left[ \frac{f(x + \delta u) - f(x)}{\delta} \right]}{ \mathbb{E} \left[ \left( \frac{f(x + \delta u) - f(x)}{\delta} \right)^2  \right] + \epsilon }
\end{equation}

For small $\delta$, Taylor expansion gives
\begin{equation}
    Y(u) = \frac{f(x + \delta u) - f(x)}{\delta} = u^T \nabla f_{\delta}(x) + \frac{\delta}{2} u^T \nabla^2_{\delta}(x)u + o(\delta^2),
\end{equation}
where $f_{\delta}$ is a smooth function with radius $\delta$. Note that even if $f$ is not smooth, $f_{\delta}$ is still smooth, and therefore it is legal to perform a Taylor expansion on $f_{\delta}$. To calculate LGI, we calculate the expectation and variance of $Y(u)$ separately.

\textbf{Calculation of the expectation and variance of $Y(u)$.}  Let $g = \nabla f_{\delta}(x)$ and $H = \nabla^2 f_{\delta}(x)$, We calculate $\mathbb{E}[Y(u)]$ and obtain
\begin{eqnarray}
    \mathbb{E}[Y(u)] & = & \mathbb{E}[u^T g] + \frac{\delta}{2} \mathbb{E}[ u^T H u ] + o(\delta^2) \nonumber \\
    & = & \frac{\delta}{2d} \text{tr}(H) + o(\delta^2).
\end{eqnarray}
Note that $u$ is a random vector following a symmetric distribution (uniform or Gaussian distribution), and therefore we have $\mathbb{E}[u] = 0$ and $\mathbb{E}[u^T H u] = \frac{1}{d}\text{tr}(H)$ \cite{wainwright2019high}.

Furthermore, we have
\begin{equation}
    Y(u)^2 = (u^T g)^2 + \delta (u^T g)(u^T H u) + \frac{\delta^2}{4}(u^T H u) + o(\delta^2),
\end{equation}
and calculate the expected value of each item
\begin{eqnarray}
    \mathbb{E}[(g^T u)^2] & = & \frac{1}{d}\|g\|^2, \nonumber \\
    \mathbb{E}[(g^Tu)(u^T H u)] & = & 0, \nonumber \\
    \mathbb{E}[(u^T H u)^2] & = & \frac{(\text{tr}(H))^2 + 2\|H\|_F^2}{d(d + 2)},
\end{eqnarray}
So 
\begin{equation}
    \mathbb{E}[Y(u)^2] = \frac{1}{d}\|g\|^2 + \frac{\delta^2}{4d(d + 2)}[(\text{tr}(H))^2 + 2\|H\|_F^2] + o(\delta^2).
\end{equation}

Finally, we obtain the variance of $Y(u)$ as
\begin{eqnarray}
    \text{Var}(Y(u)) & = & \mathbb{E}[Y(u)^2] - (\mathbb{E}[Y(u)])^2 \nonumber \\
    & = & \frac{1}{d}\|g\|^2 + \frac{\delta^2}{4d(d + 2)}[(\text{tr}(H))^2 + 2\|H\|_F^2] - \frac{\delta^2}{4d^2}(\text{tr}(H))^2 + o(\delta^2) \nonumber \\
    & = & \frac{1}{d}\|g\|^2 + \frac{\delta^2}{2d(d + 2)}\|H\|_F^2 - \frac{\delta^2}{2d^2(d + 2)}(\text{tr}(H))^2 + o(\delta^2).
\end{eqnarray}
    
\textbf{LGI estimate.} Substituting the above estimates of mean and variance into LGI, we obtain
\begin{eqnarray}
    \rho_t(x) & = & \frac{\text{Var}(Y(u))}{\mathbb{E}[Y(u)^2] + \epsilon} \nonumber \\
    & = & \frac{\frac{1}{d}\|g\|^2 + \frac{\delta^2}{2d(d + 2)}\|H\|_F^2 - \frac{\delta^2}{2d^2(d + 2)}(\text{tr}(H))^2 + o(\delta^2)}{\frac{1}{d}\|g\|^2 + \frac{\delta^2}{4d(d + 2)}[(\text{tr}(H))^2 + 2\|H\|_F^2] + o(\delta^2) + \epsilon}
\end{eqnarray}

In high-dimensional optimization problems, especially when $d$ is large, the Hessian matrix often exhibits characteristics of a random matrix, hence: When $H$ is a random matrix, then
\begin{eqnarray}
   \mathbb{E}[\text{tr}(H)] = \sum_i \mathbb{E}[H_{ii}] = 0. 
\end{eqnarray}
Furthermore, when $d$ is very large, then according to the law of large numbers, we have
\begin{eqnarray}
    \frac{\text{tr}(H)}{d} \rightarrow 0.
\end{eqnarray}

We obtain an approximate estimate of $\rho_t$ as
\begin{eqnarray}
    \rho_t(x) & \approx & \frac{\frac{1}{d}\|g\|^2 + \frac{\delta^2}{2d(d + 2)}\|H\|_F^2}{\frac{1}{d}\|g\|^2 + \frac{\delta^2}{2d(d + 2)}\|H\|_F^2 + \epsilon} \nonumber \\
    & \approx & \frac{\|g\|^2/d + \delta^2\|H\|_F^2/(2d^2)}{\|g\|^2/d + \delta^2\|H\|_F^2/(2d^2) + \epsilon} \nonumber \\
    & = & \frac{1 + \frac{1}{2d}\kappa_{\delta}(x)^2}{1 + \frac{1}{2d}\kappa_{\delta}(x)^2 + \epsilon d/\|g\|^2},
\end{eqnarray}
where $g = \nabla f_{\delta}(x)$, $H = \nabla^2 f_{\delta}(x)$ and $\kappa_{\delta}(x)$ is the relative curvature of the function after random smoothing.
\begin{equation}
    \kappa_{\delta}(x) = \frac{\delta \|H\|_F}{\|g\|}.
\end{equation}

\textbf{Conclusions.} When $\kappa_{\delta} \gg 0$ is high curvature/non-smooth, then $\rho \approx 1$; when $\kappa_{\delta} \approx 0$ is flat, then $\rho \approx 0$. Therefore, \textbf{LGI essentially measures the relative curvature of a function after random smoothing, and is a smoothness indicator.} Non-smooth regions correspond to high relative curvature, thus triggering damping.

\section{Stability comparison between S-Adam and Adam}\label{app:s-adam-adam}

\textbf{Empirical risk and expected risk.} For supervised learning, let the input space be $\mathcal{X}$, the output space be $\mathcal{Y}$, and the data distribution $\mathcal{D}$ be defined on $\mathcal{X} \times \mathcal{Y}$. We define a loss function $\ell(w;z)$ on the training set $S = \{z_1,\cdots,z_n\} \sim \mathcal{D}$ such that $z_i = (x_i,y_i)$
\begin{equation}
    \ell: \mathcal{W} \times (\mathcal{X} \times \mathcal{Y}) \rightarrow \mathbb{R},
\end{equation}
where $\mathcal{W}$ is the parameter space. We have the following empirical risk and expected risk
\begin{equation}
    L_{S}(w) = \frac{1}{n} \sum_{i = 1}^n \ell(w;z_i),\quad L_{\mathcal{D}}(w) = \mathbb{E}_{z \sim \mathcal{D}}[\ell(w;z)].
\end{equation}

\textbf{Algorithm stability and generalization gap theorem.} Below we give the definition of uniform stability.

\begin{definition}
(Uniform Stability). Algorithm $A: S \rightarrow w_{S}$ is $\beta$-uniform and stable if, for any datasets $S$ and $S'$ that are distinct only on one sample,
\begin{equation}
    \sup_{z} \mathbb{E}_{A}[|\ell(A(S);z) - \ell(A(S');z)|] \le \beta,
\end{equation}
where the expectation is based on the random sampling of the algorithm (random initialization, batch sampling).
\end{definition}

We cite the following stability and generalization gap theorems without proof.

\begin{theorem}
(Stability and Generalization Gap,\cite{bousqet2002stability},Theorem 12). If algorithm $A$ is $\beta$-uniformly stable, then 
\begin{equation}
    \mathbb{E}_{S}\mathbb{E}_{A}[L_{\mathcal{D}}(A(S)) - L_{S}(A(S))] \le 2\beta,
\end{equation}
with a probability of at least $1 - \delta$,
\begin{equation}
    L_{\mathcal{D}}(A(S)) \le L_{S}(A(S)) + 2\beta + (4n\beta + M)\sqrt{\frac{\ln(1/\delta)}{2n}},
\end{equation}
where $M$ is the upper bound of the loss. 
\end{theorem}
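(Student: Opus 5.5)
This theorem is the bounded-difference generalization result of Bousquet and Elisseeff; the paper cites it without proof. I would reproduce the standard two-part argument: an in-expectation bound by symmetrization, then a high-probability bound via McDiarmid's inequality applied to $\Phi(S)=L_{\mathcal{D}}(A(S))-L_S(A(S))$. Throughout I take $A$ to be the expectation over the algorithm's internal randomness. Equivalently, I work with $\mathbb{E}_A$ of the losses, so that uniform stability applies directly.

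\textbf{Step 1 (expectation bound).} Let $S^{(i)}$ denote $S$ with $z_i$ replaced by an independent copy $z_i'$. By exchangeability,
\[
\mathbb{E}_S[L_{\mathcal{D}}(A(S))]=\mathbb{E}_{S,z_i'}[\ell(A(S^{(i)});z_i)] .
\]
Hence
\[
\mathbb{E}[\Phi(S)]=\frac1n\sum_i\mathbb{E}\big[\ell(A(S^{(i)});z_i)-\ell(A(S);z_i)\big]\le\beta .
\]
This gives a bound of $\beta$, which is already at most $2\beta$. If the stated constant $2\beta$ is the target, it also follows from the cruder route that compares through a leave-one-out set $S^{\setminus i}$. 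That route uses stability twice.

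\textbf{Step 2 (bounded differences).} I bound $|\Phi(S)-\Phi(S^{(i)})|$ by splitting it into two terms.

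The population term satisfies $|L_{\mathcal{D}}(A(S))-L_{\mathcal{D}}(A(S^{(i)}))|\le\beta$ by uniform stability.

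The empirical term $|L_S(A(S))-L_{S^{(i)}}(A(S^{(i)}))|$ splits further. The $n-1$ shared samples each contribute at most $\beta/n$. The swapped sample contributes at most $M/n$, since the loss lies in $[0,M]$.

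Together this gives $c_i\le 2\beta+M/n$. The paper's constant $(4n\beta+M)/n$ corresponds to the looser $c_i\le 4\beta+M/n$, obtained by passing through the leave-one-out set. I would adopt whichever bound matches the stated constant.

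\textbf{Step 3 (McDiarmid).} McDiarmid's inequality gives
\[
P\big(\Phi-\mathbb{E}\Phi\ge t\big)\le\exp\!\Big(-\frac{2t^2}{\sum_i c_i^2}\Big).
\]
Substituting $c_i=(4n\beta+M)/n$ gives $\sum_i c_i^2=(4n\beta+M)^2/n$. Solving for $t$ at confidence level $\delta$ yields
\[
t=(4n\beta+M)\sqrt{\ln(1/\delta)/(2n)} .
\]
Combining this with Step 1 gives the claimed bound.

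\textbf{Main obstacle.} The delicate point is the randomized setting. McDiarmid requires the bounded-difference property for a deterministic function of $S$. I would therefore apply the argument to $\mathbb{E}_A$ of the losses rather than to a single run of $A$. The resulting high-probability statement then holds for the averaged risk. Tracking the constants so that they match the cited $(4n\beta+M)$ form is routine.
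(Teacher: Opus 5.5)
Your argument is correct. It is the standard proof of Bousquet--Elisseeff's Theorem~12: an exchangeability identity bounds $\mathbb{E}\Phi$, and McDiarmid's inequality gives the tail. The paper gives no proof of its own and simply cites that result.

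Two clarifications are worth making.

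First, under the paper's definition, which compares only equal-size samples differing in one point, what you actually establish is $\mathbb{E}\Phi\le\beta$ and $c_i\le 2\beta+M/n$. The stated $2\beta$ and $(4n\beta+M)$ then follow a fortiori, because enlarging the $c_i$ keeps the bounded-difference hypothesis valid and only weakens McDiarmid's tail. The extra factor of $2$ is inherited from Bousquet--Elisseeff's removal-based definition. For the same reason, the leave-one-out route you mention in Steps~1--2 is not available here: the hypothesis gives no control over $A(S^{\setminus i})$. Drop that remark rather than appeal to it.

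Second, proving the tail bound for the $\mathbb{E}_A$-averaged risks is genuinely necessary, not just convenient. With stability only in expectation over $A$, the single-run version fails in general. Consider an algorithm that with probability $p=n^{-2}$ outputs a hypothesis with loss $0$ on $S$ and $M$ elsewhere, and otherwise a fixed hypothesis. It has $\beta\le 2pM$. Yet for atomless $\mathcal{D}$ its gap equals $M$ with probability $p>\delta:=p/2$, while the right-hand side tends to $0$.
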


Based on the above theorem, we can define a stability parameter $\beta$
\begin{equation}
    \beta = \sup_{S,S'} \mathbb{E}_{A}[|\ell(w_T;z) - \ell(w'_T;z)|].
\end{equation}
Because the loss function $\ell$ is Lipschitz continuous, we have
\begin{equation}
    \beta \le L \cdot \mathbb{E}_A[\|w_T  -w'_T\|].
\end{equation}

Below we compare the values of Adam and S-Adam on the function $\mathbb{E}_A[\|w_T -w'_T\|]$.

\textbf{Upper bound of the absolute value of the LGI difference.} We can define the directional gradient of the batch at step $t$ on the two datasets $S$ and $S'$
\begin{equation}
    D_i^{(S)} = \frac{\ell(w_t + \delta u_i;S) - \ell(w_t;S)}{\delta},
    \quad D_i^{(S')} = \frac{\ell(w_t + \delta u_i;S') - \ell(w_t;S')}{\delta}.
\end{equation}

Furthermore, we set
\begin{eqnarray}
    \bar{D}^{(S)}  =  \frac{1}{k} \sum_{i = 1}^k D_i^{(S)},&& \bar{D}^{(S')}  =  \frac{1}{k} \sum_{i = 1}^k D_i^{(S')} \\
    \hat{V}^{(S)}  =  \frac{1}{k} \sum_{i = 1}^k (D_i^{(S)} - \bar{D}^{(S)})^2 && \hat{V}^{(S')}  =  \frac{1}{k} \sum_{i = 1}^k (D_i^{(S')} - \bar{D}^{(S')})^2 \\
    \hat{E}^{(S)}  =  \frac{1}{k} \sum_{i = 1}^k (D_i^{(S)})^2  && \hat{V}^{(S')}  =  \frac{1}{k} \sum_{i = 1}^k (D_i^{(S')})^2
\end{eqnarray}

\begin{lemma}\label{lem:lgi-upperbound}
    (Upper bound of the absolute value of the LGI difference). If we define the LGI as 
    \begin{equation}
    \rho_t(x) = \frac{\hat{V}^{(S)}}{\hat{E}^{(S)} + \epsilon}, \quad \rho'_t(x) = \frac{\hat{V}^{(S')}}{\hat{E}^{(S')} + \epsilon},
    \end{equation}
    then we have $|\rho_t(x)  - \rho'_t(x)| = O\left( \frac{1}{b} \right)$.
\end{lemma}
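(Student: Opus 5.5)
We bound the difference of the two ratios through a perturbation argument on the probes. The key input is that the batch losses on $S$ and $S'$ differ in only one of the $b$ samples of the mini-batch. We also take the same probe directions $u_i$ on both sides, since the randomness of the algorithm is coupled, and the same iterate $w_t$ at step $t$.

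\textbf{Step 1: per-probe perturbation.} Write $\ell(w;S)$ as an average of $b$ per-sample losses. Then
\[
D_i^{(S)}-D_i^{(S')}=\frac{1}{b}\cdot\frac{[\ell(w_t+\delta u_i;z)-\ell(w_t;z)]-[\ell(w_t+\delta u_i;z')-\ell(w_t;z')]}{\delta}.
\]
Using $L$-Lipschitzness of $\ell$ and $\|u_i\|=1$, each bracket is at most $L\delta$ in absolute value. This gives $|D_i^{(S)}-D_i^{(S')}|\le 2L/b$ for every $i$. The same argument gives $|D_i^{(S)}|,|D_i^{(S')}|\le L$, exactly as in Appendix~\ref{app:sample-complexity}.

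\textbf{Step 2: perturbation of the moments.} From Step 1 and $|\bar D|\le L$:
\begin{itemize}
\item The means satisfy $|\bar D^{(S)}-\bar D^{(S')}|\le 2L/b$.
\item Since $|a^2-c^2|\le |a-c|(|a|+|c|)$, the second moments satisfy $|\hat E^{(S)}-\hat E^{(S')}|\le 4L^2/b$.
\item Writing $\hat V=\hat E-\bar D^2$, we get $|\hat V^{(S)}-\hat V^{(S')}|\le |\hat E^{(S)}-\hat E^{(S')}|+|(\bar D^{(S)})^2-(\bar D^{(S')})^2|\le 8L^2/b$.
\end{itemize}

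\textbf{Step 3: Lipschitz property of the ratio.} Define $\phi(V,E)=V/(E+\epsilon)$ on the region $0\le V\le E\le L^2$. On this region,
\[
|\partial_V\phi|\le 1/\epsilon,\qquad |\partial_E\phi|=V/(E+\epsilon)^2\le L^2/\epsilon^2.
\]
The segment between $(\hat V^{(S)},\hat E^{(S)})$ and $(\hat V^{(S')},\hat E^{(S')})$ stays inside the convex set $\{0\le V,\ 0\le E\le L^2\}$, and the derivative bounds hold on that set because $V\le L^2$ there. The mean value theorem then gives
\[
|\rho_t-\rho'_t|\le \frac{8L^2}{\epsilon b}+\frac{4L^4}{\epsilon^2 b}=O(1/b).
\]
The hidden constant depends on $L$ and $\epsilon$ but not on $k$ or $t$.

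\textbf{Main obstacle.} The delicate step is Step 3: the argument needs the denominator bounded away from zero. This is exactly where $\epsilon>0$ enters, in the same way as the constant $M=O(1/\epsilon^2)$ in Appendix~\ref{app:sample-complexity}.

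A further subtlety is that the $O(1/b)$ bound holds only when both batches actually differ in one sample at the same iterate. If the differing sample is not in the batch, the difference is $0$, which is consistent with the bound.

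Finally, we must state explicitly that the probes $u_i$ are shared between the two runs. Without this coupling, $\rho_t$ and $\rho'_t$ would differ by $O(1)$ sampling noise, and no $O(1/b)$ bound could hold.
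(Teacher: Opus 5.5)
Your proof is correct and follows the same three-step route as the paper's: first a per-probe bound $|D_i^{(S)}-D_i^{(S')}|\le 2L/b$ (same $w_t$, same probes $u_i$), then propagation of this bound to $\hat E$ and $\hat V$, and finally a mean-value bound on $V/(E+\epsilon)$, where $\epsilon>0$ controls the derivatives. Your execution is tidier than the paper's in three places: the factorization $|a^2-c^2|\le|a-c|(|a|+|c|)$ replaces its first-order convexity step, $\|D^{(S)}\|_\infty\le L$ is made explicit, and the derivatives are bounded along the segment rather than at an endpoint; the one fix needed in Step~3 is to invoke the convex triangle $\{0\le V\le E\le L^2\}$ (or just note $V\le L^2$ at both endpoints, hence on the segment), because the set $\{0\le V,\ 0\le E\le L^2\}$ you name does not itself bound $V$.
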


\begin{proof} To estimate $|\rho_t(x) - \rho'_t(x)|$, we will subsequently estimate $|D_i^{(S)} - D_i^{(S')}|$, $|\hat{E}^{(S)} - \hat{E}^{S'}|$, and $|\hat{V}^{(S)}  - \hat{V}^{(S')}|$.

For a certain direction $u_i$, according to the Mean Value Theorem and $\|\ell(w;z)\| \le L$, we have
\begin{eqnarray}
|D_i^{(S)} - D_i^{(S')}| & = & \frac{1}{\delta b}|\ell(w + \delta u_i;z_k) - \ell(w_t;z_k) - ( \ell(w_t + \delta u_i;z'_k)  - \ell(w_t;z'_k))| \nonumber \\
& = & \frac{1}{\delta b}|\delta u_i^T \nabla \ell(\xi_i;z_k) - \delta u_i^T \nabla \ell(\xi'_i,z'_k)| \nonumber \\
& \le & \frac{1}{b} \|\nabla \ell(\xi_i;z_k) - \nabla \ell(\xi'_i;z'_k)\| \nonumber \\
& \le & \frac{2L}{b}.
\end{eqnarray}

According to the properties of convex functions, we have
\begin{eqnarray}
    |\hat{E}^{(S')} - \hat{E}^{(S)}| & \le & |(D^{(S)} - D^{(S')})^T\nabla \hat{E}| \nonumber \\
    & \le & 2\sum_i |D_i^{(S)} - D_i^{(S')}||D_i^{(S)}| \nonumber \\
    & \le & 2 \max_i |D_i^{(S)} - D_i^{(S')}| \|D^{(S)}\|_{\infty} \nonumber \\
    & \le & \frac{4L}{b} \beta.
\end{eqnarray}
where $\|D^{(S)}\|_{\infty} = \beta$. Similarly, we have
\begin{eqnarray}
    |\hat{V}^{(S')} - \hat{V}^{(S)}| & \le & |(D^{(S)} - D^{(S')})^T\nabla \hat{E}| \nonumber \\
    & \le & 2 \sum_i |D_i^{(S)} - D_i^{(S')}||D_i^{(S)} - \bar{D}^{(S)}| \nonumber \\
    & \le & 2 \sum_i |D_i^{(S)} - D_i^{(S')}|(|D_i^{(S)}| + |\bar{D}^{(S)}|) \nonumber \\
    & \le & 4 |D_i^{(S)} - D_i^{(S')}| \|D^{(S)}\|_{\infty} \nonumber \\
    & \le & \frac{8L}{b} \beta.
\end{eqnarray}

 Ultimately, we have
\begin{eqnarray}
    |\rho_t(S') - \rho_t(S)| & \le & \|\nabla \rho_t(S)\| \|(\hat{V}^{(S')},\hat{E}^{(S')}) - (\hat{V}^{(S)},\hat{E}^{(S)})\| \nonumber \\
    & \le & \frac{1}{\epsilon} \sqrt{1 + \frac{(\hat{V}^{(S)})^2}{\epsilon^2}} \sqrt{|\hat{V}^{(S)} - \hat{V}^{S'}|^2 + |\hat{E}^{(S)} - \hat{E}^{(S')}|^2 } \nonumber \\
    & \le & \frac{1}{\epsilon} \sqrt{1 + \frac{L^2}{\epsilon^2}} \frac{4L\beta}{b} \sqrt{5} = O\left( \frac{1}{b} \right).
\end{eqnarray}
\end{proof}

\textbf{Boundary of parameter difference norm.} Consider two datasets,
\begin{equation}
    S = \{z_1,\cdots,z_n\},\quad S' = \{z'_1,\cdots,z'_n\},
\end{equation}
where they differ only in the $k$-th sample $z_k \neq z'_k$,  and otherwise $z_i = z'_i$ for all $i \neq k$. Let the batch index set be $I_t \subseteq \{1,\cdots,n\}$, with size $|I_t| = b$. Therefore, we can define the update of the parameter at step $t$ on the two datasets $S$ and $S'$
\begin{equation}
    w_{t + 1} = w_t - \eta_t d_t,\quad w'_{t + 1} = w'_t - \eta_t d'_t.
\end{equation}
Furthermore, we have
\begin{equation}\label{eqn:w-diff}
    \|w_{t + 1} - w'_{t + 1}\| \le \|w_t - w'_t\| + \eta_t \|d_t - d'_t\|.
\end{equation}
    
Comparing Adam and S-Adam, we obtain the following two different update directions
\begin{equation}
    d_t^{\text{Adam}} = g_t = \frac{m_t}{\sqrt{v_t} + \epsilon},\quad d_t^{\text{S-Adam}} = \alpha_t g_t = e^{-\lambda \rho_t}\frac{m_t}{\sqrt{v_t} + \epsilon}.
\end{equation}

We estimate the upper bound of $\|d_t^{\text{S-Adam}} - d_t'^{\text{S-Adam}}\|$
\begin{eqnarray}
    \|d_t^{\text{S-Adam}} - {d'}_t^{\text{S-Adam}}\| & = & \|\alpha_t g_t - \alpha'_t g'_t\| \nonumber \\
    & = & \|\alpha_t g_t - \alpha_t g'_t + \alpha_t g'_t - \alpha'_t g'_t\| \nonumber \\
    & \le & \alpha_t \|g_t - g'_t\| + |\alpha_t - \alpha'_t|\|g'_t\| \nonumber \\
    & = & \alpha_t \|d_t^{\text{Adam}} - {d'}_t^{\text{Adam}}\| + |\alpha_t - \alpha'_t|\|g'_t\|.
\end{eqnarray}

Notice that $\|g_t\| \le L'$ and Lemma \ref{lem:lgi-upperbound}, then we have ($e^{-x}$ is a convex function)
\begin{equation}
    \|g'_t\||\alpha_t - \alpha'_t|  =  |e^{\lambda \rho_t} - e^{\lambda \rho'_t}| 
     \le  |\lambda e^{-\lambda \rho'_t}(\rho_t  - \rho'_t)| \le \lambda |\rho_t - \rho'_t| \le \frac{C \lambda L}{b}.
\end{equation}

When the batch size $b$ is large, the second term can be ignored, that is
\begin{equation}
    \|d_t^{\text{S-Adam}} - {d'}_t^{\text{S-Adam}}\| \le \alpha_t \|d_t^{\text{Adam}} - {d'}_t^{\text{Adam}}\|.
\end{equation}
    
According to Eqn. (\ref{eqn:w-diff}), we have
\begin{equation}
    \|w_T - w'_T\| \le \sum_{t = 1}^T \eta_t \|d_t - d'_t\|
\end{equation}

Therefore, we obtain $\beta_{\text{Adam}}$
\begin{equation}
\beta_{\text{Adam}}  \le  L \sum_{t = 1}^T \eta_t \|d_t^{\text{Adam}} - {d'}_t^{\text{Adam}}\| = C_{\text{Adam}}.
\end{equation}

Futhermore, we have
\begin{eqnarray}
    \beta_{\text{S-Adam}} & = & L \sum_{t = 1}^T \eta_t \|d_t^{\text{S-Adam}} - {d'}_t^{\text{S-Adam}}\| \nonumber \\
& \le & L \sum_{t = 1}^T \eta_t \alpha_t\|d_t^{\text{Adam}} - {d'}_t^{\text{Adam}}\|  \nonumber \\
& \le & (\max_t \alpha_t) L \sum_{t = 1}^T \eta_t \|d_t^{\text{Adam}} - {d'}_t^{\text{Adam}}\| = (\max_t \alpha_t) C_{\text{Adam}}.
\end{eqnarray}

For non-smooth regions, $\rho_t$ becomes relatively high, so $\alpha_t=e^{-\lambda\rho_t}$ is reduced. Thus S-Adam dampens unstable updates more strongly than Adam in geometrically unstable regions.

\section{Additional Experimental Details}
\subsection{Experimental Hyperparameters}
\begin{table}[ht!]
\centering
\caption{Expanded Hyperparameters for Experiment 1}
\label{tab:hyperparams}
\begin{tabular}{lccc}
\hline
\textbf{Parameter} & \textbf{AdamW} & \textbf{Prox-SGD} & \textbf{S-Adam (Ours)} \\ \hline
Learning Rate ($\eta$) & 0.001 & 0.01 & 0.001 \\
Weight Decay ($\lambda_{wd}$) & 0.01 & -- & 0.01 \\
L1 Regularization ($\lambda_{L1}$) & -- & 0.0001 & -- \\
Momentum / Betas & (0.9, 0.999) & 0.9 & (0.9, 0.999) \\
Batch Size & 128 & 128 & 128 \\
Epochs & 10 & 10 & 10 \\ \hline
\rowcolor[HTML]{EFEFEF} 
\multicolumn{4}{l}{\textit{S-Adam Specific Geometry Params}} \\ \hline
Perturbation Scale ($\delta$) & -- & -- & 0.01 \\
Directional Probes ($K$) & -- & -- & 2, 8 \\
Damping Intensity ($\lambda_{LGI}$) & -- & -- & 2.0 \\
LGI Score Cap & -- & -- & 10.0 \\
Stabilization $\epsilon$ & -- & -- & $10^{-6}$ \\ \hline
\end{tabular}
\end{table}

\begin{table}[ht!]
\centering
\caption{Expanded Hyperparameters for Experiment 2}
\label{tab:hyperparams_exp2}
\begin{tabular}{lccc}
\hline
\textbf{Parameter} & \textbf{AdamW} & \textbf{Prox-SGD} & \textbf{S-Adam (Ours)} \\ \hline
Learning Rate ($\eta$) & 0.001 & 0.01 & 0.001 \\
Weight Decay ($\lambda_{wd}$) & 0.01 & -- & 0.01 \\
L1 Regularization ($\lambda_{L1}$) & -- & 0.0001 & -- \\
Momentum / Betas & (0.9, 0.999) & 0.9 & (0.9, 0.999) \\
Batch Size & 2,4,16,64 & 2,4,16,64 & 2,4,16,64 \\
Epochs & 20 & 20 & 20 \\ \hline
\rowcolor[HTML]{EFEFEF} 
\multicolumn{4}{l}{\textit{S-Adam Specific Geometry Params}} \\ \hline
Perturbation Scale ($\delta$) & -- & -- & 0.01 \\
Directional Probes ($K$) & -- & -- & 2, 8 \\
Damping Intensity ($\lambda_{LGI}$) & -- & -- & 2.0 \\
LGI Score Cap & -- & -- & 10.0 \\
Stabilization $\epsilon$ & -- & -- & $10^{-6}$ \\ \hline
\end{tabular}
\end{table}

\begin{figure}[ht!]
    \centering
    % Row 1: Batch size 4
    \begin{subfigure}[b]{0.3\textwidth}
        \centering
        \includegraphics[width=\textwidth]{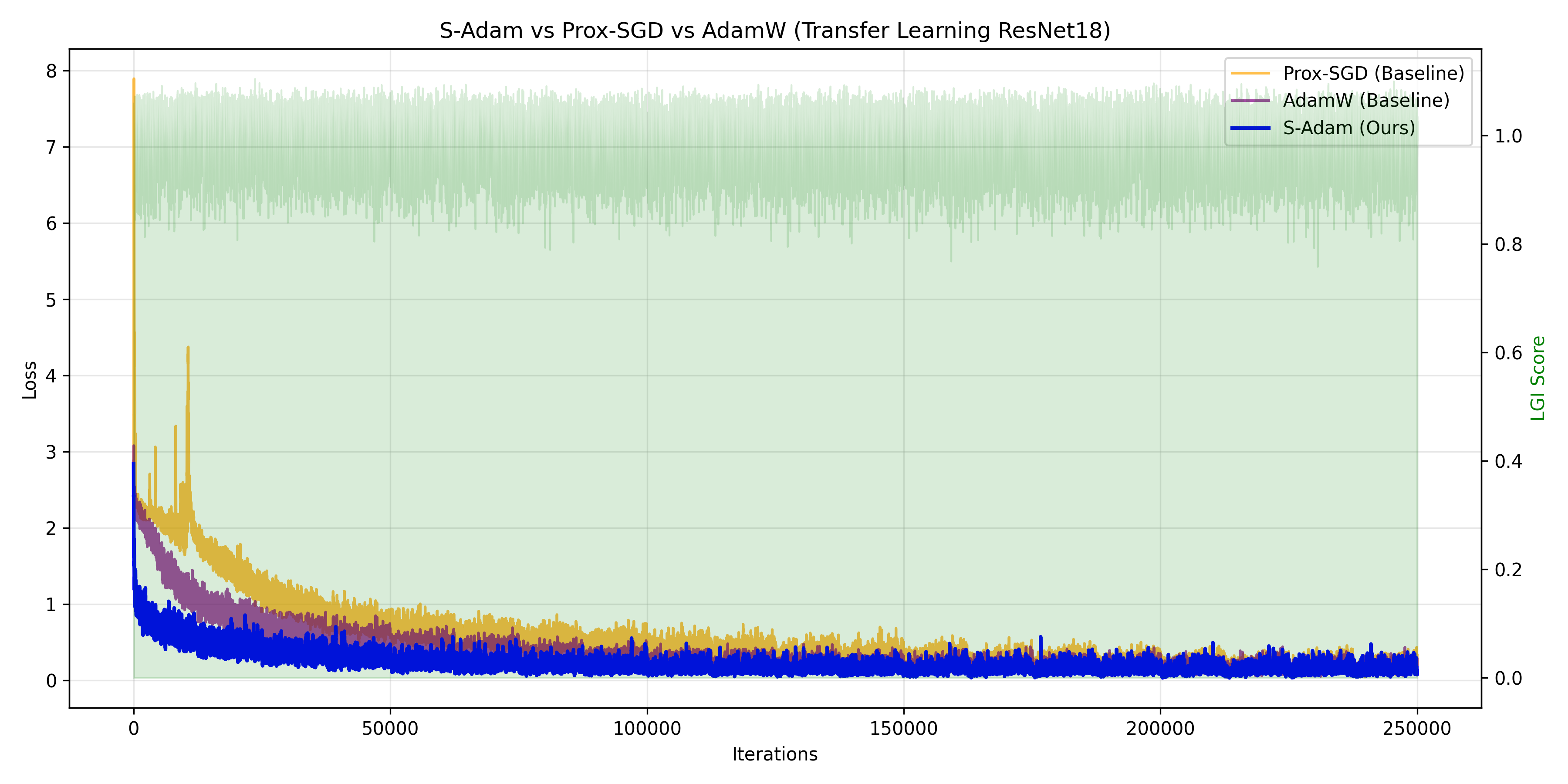}
        \caption{CIFAR10, Batch 4}
    \end{subfigure}
    \hfill
    \begin{subfigure}[b]{0.3\textwidth}
        \centering
        \includegraphics[width=\textwidth]{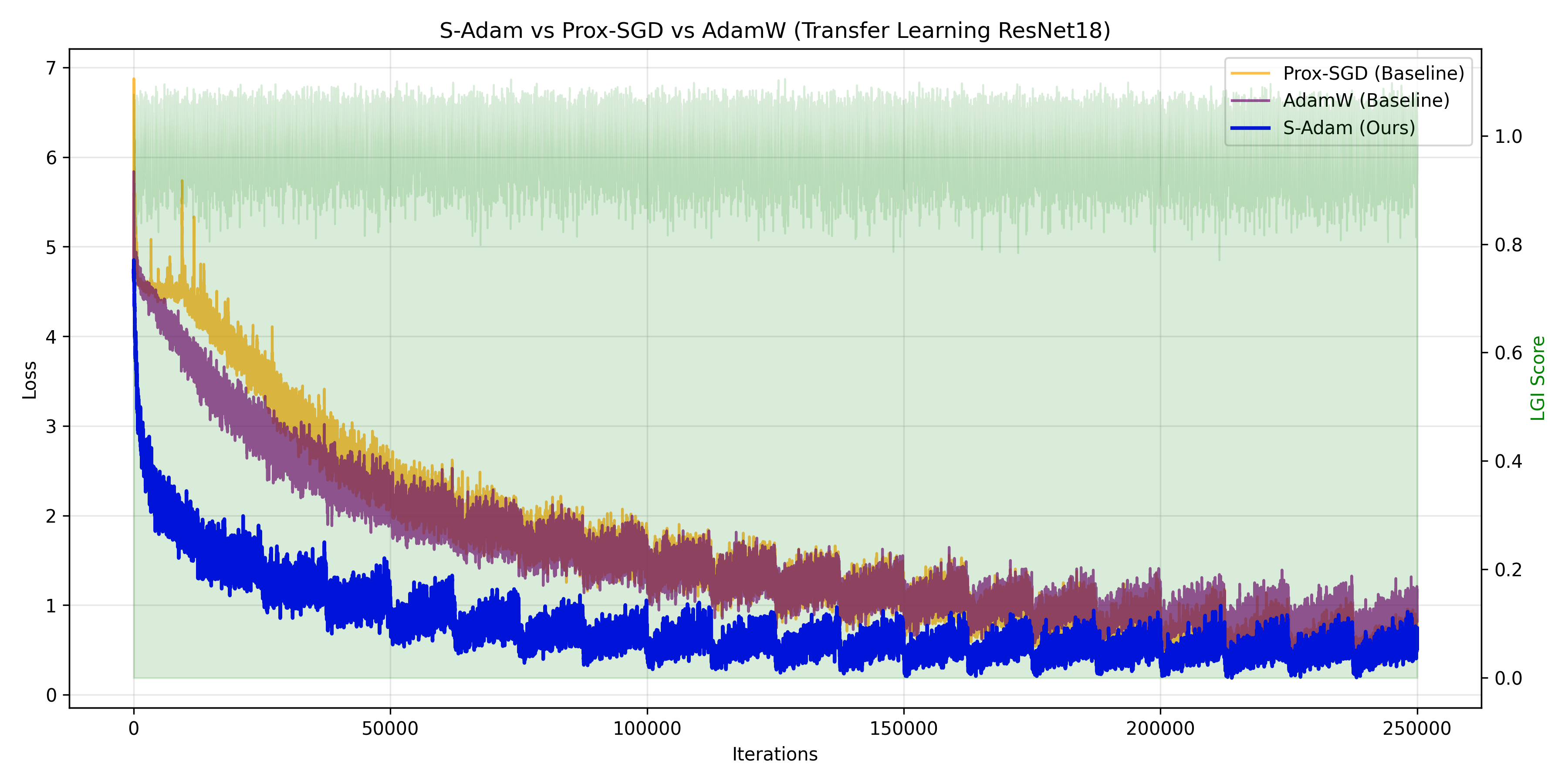}
        \caption{CIFAR100, Batch 4}
    \end{subfigure}
    \hfill
    \begin{subfigure}[b]{0.3\textwidth}
        \centering
        \includegraphics[width=\textwidth]{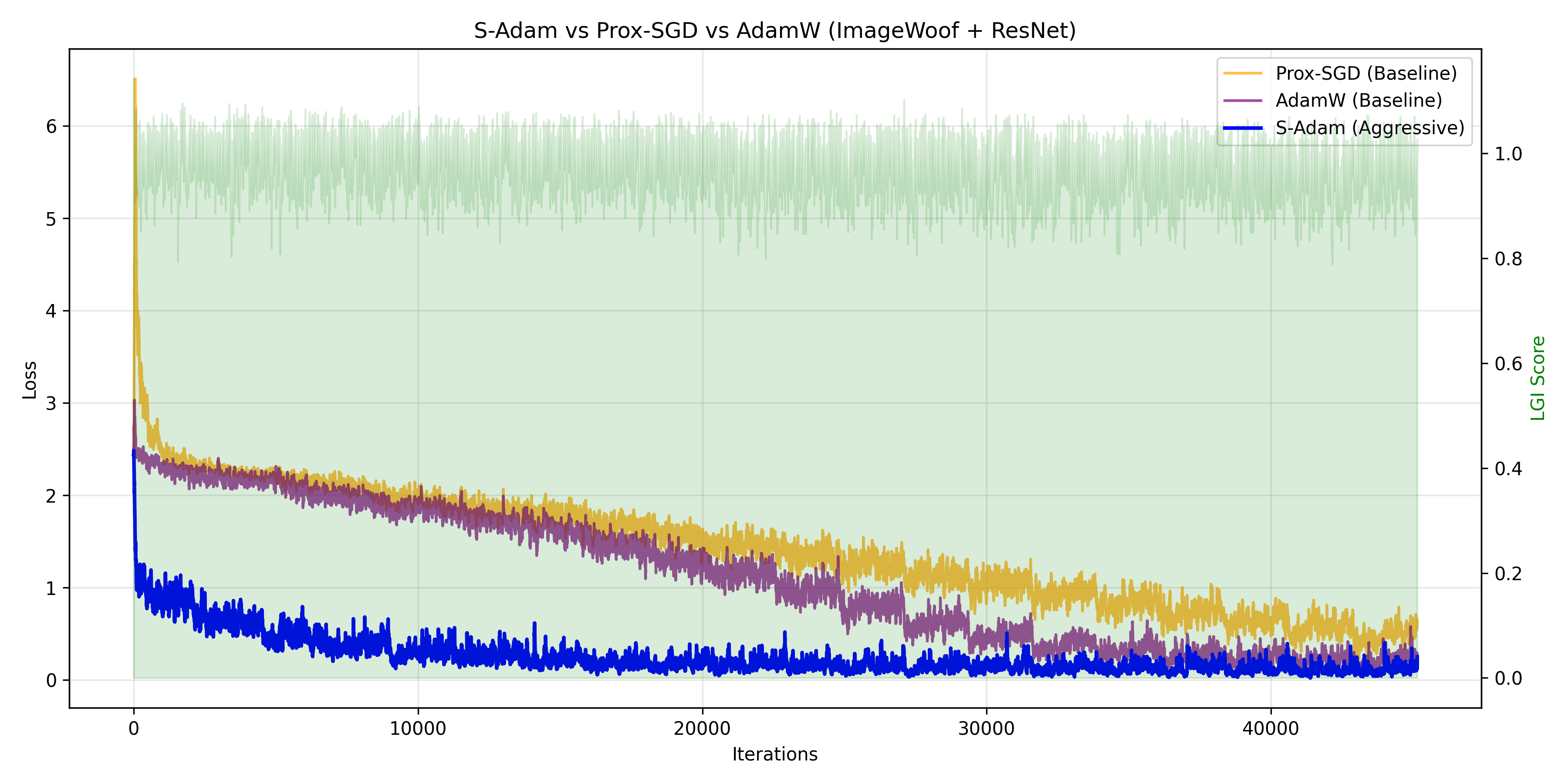}
        \caption{Imagewoof2-160, Batch 4}
    \end{subfigure}

    \vspace{10pt} 

    % Row 2: Batch size 16
    \begin{subfigure}[b]{0.3\textwidth}
        \centering
        \includegraphics[width=\textwidth]{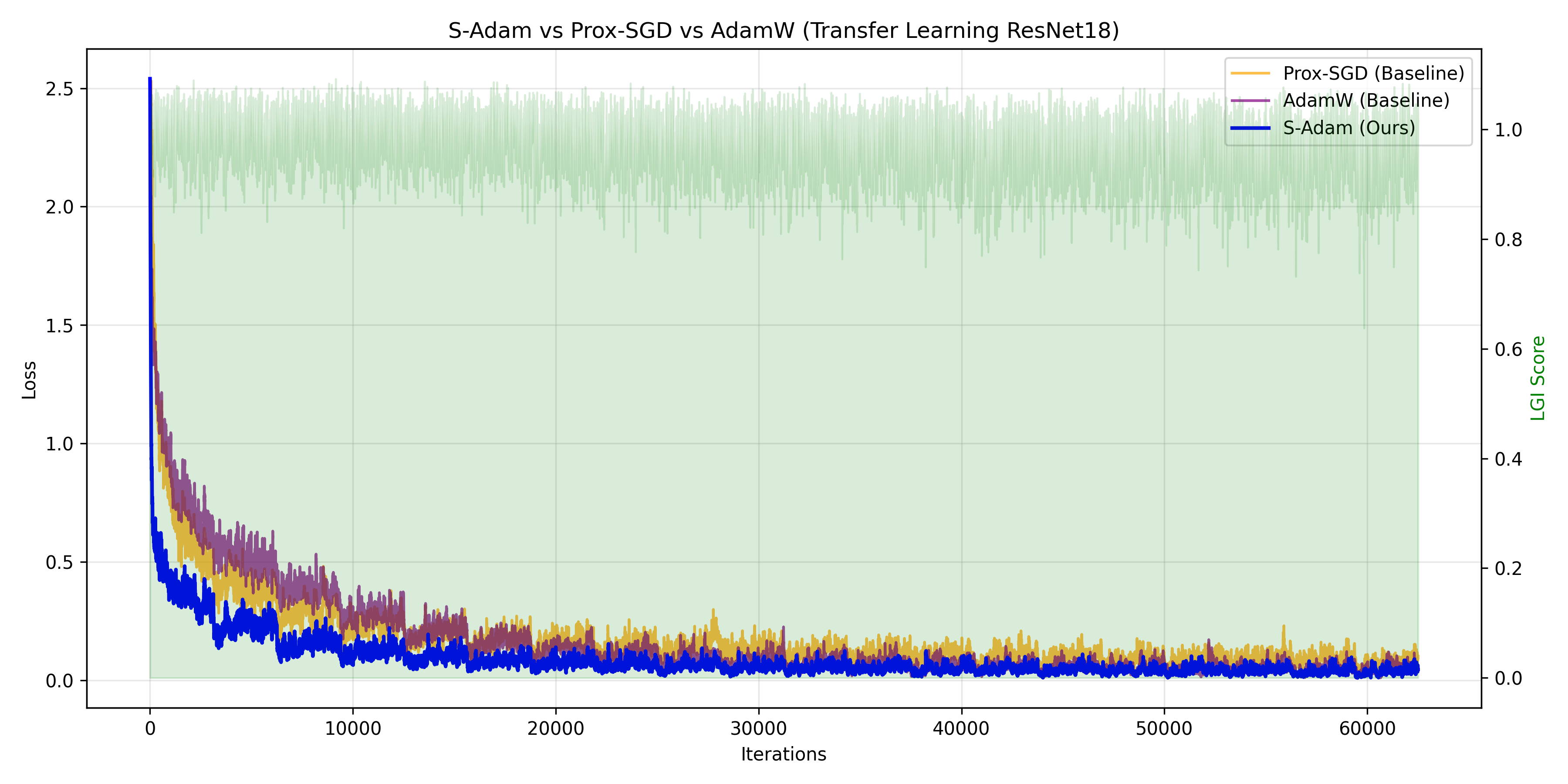}
        \caption{CIFAR10, Batch 16}
    \end{subfigure}
    \hfill
    \begin{subfigure}[b]{0.3\textwidth}
        \centering
        \includegraphics[width=\textwidth]{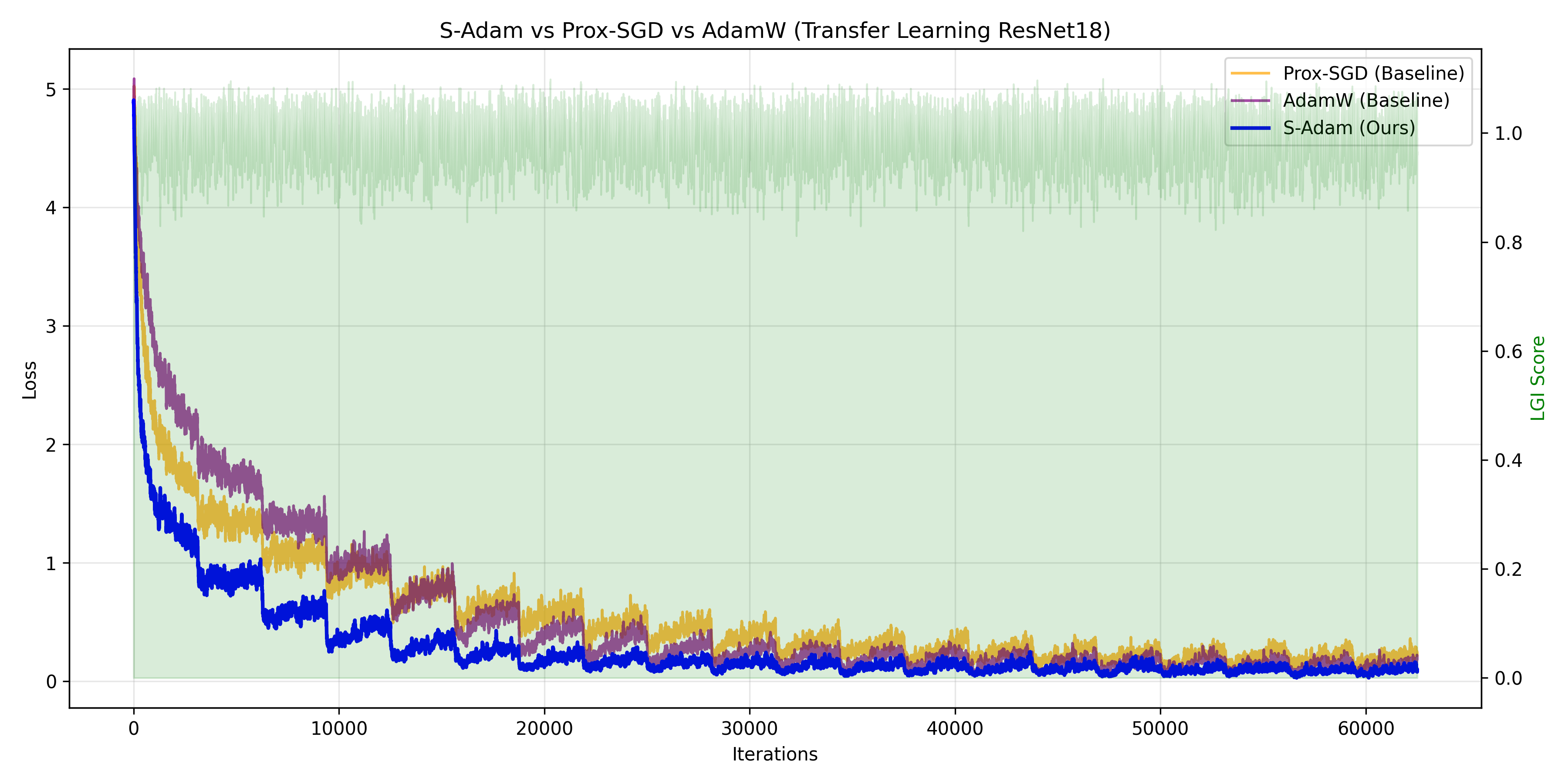}
        \caption{CIFAR100, Batch 16}
    \end{subfigure}
    \hfill
    \begin{subfigure}[b]{0.3\textwidth}
        \centering
        \includegraphics[width=\textwidth]{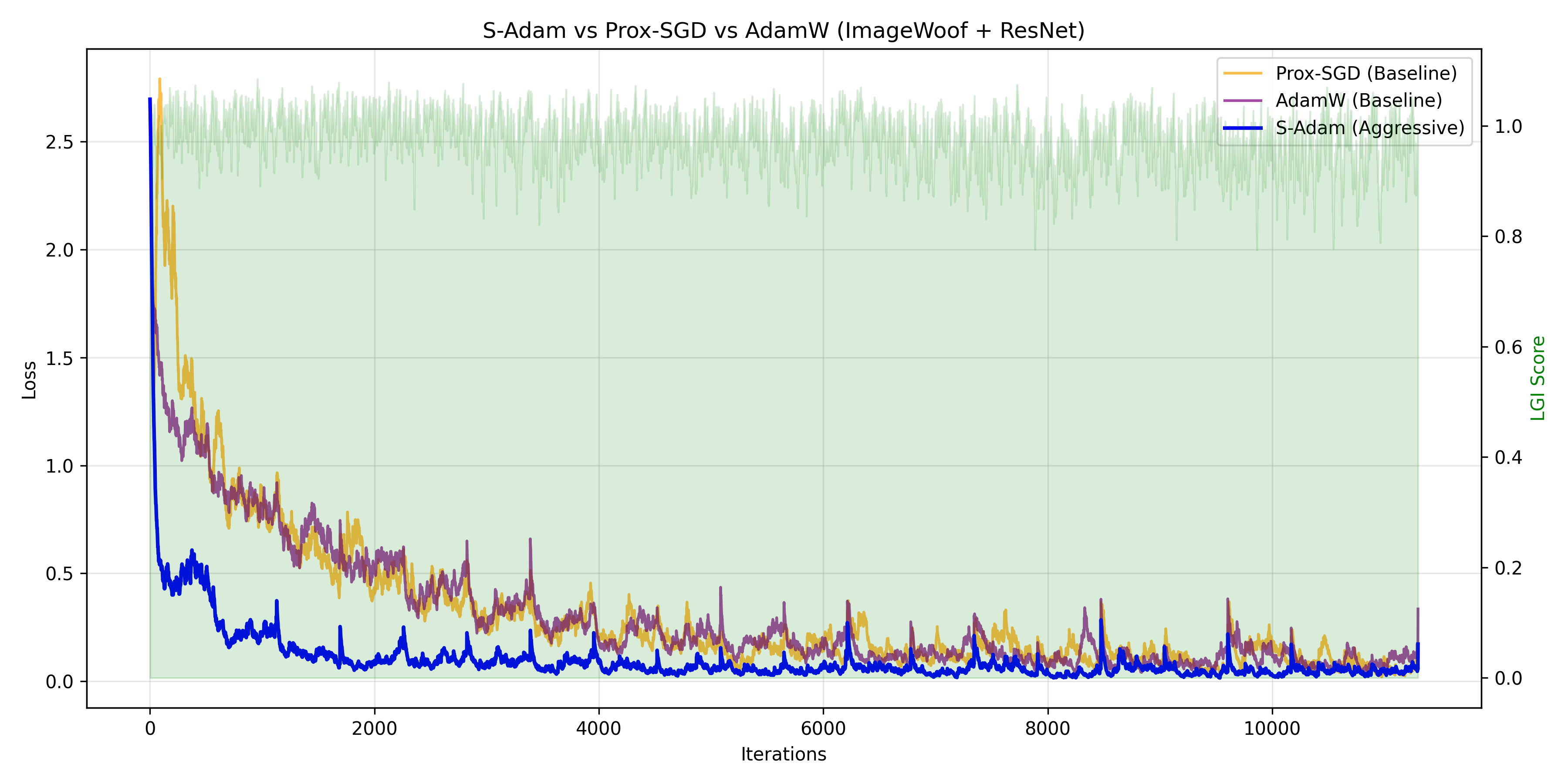}
        \caption{Imagewoof2-160, Batch 16}
    \end{subfigure}

    \vspace{10pt}

    % Row 3: Batch size 64
    \begin{subfigure}[b]{0.3\textwidth}
        \centering
        \includegraphics[width=\textwidth]{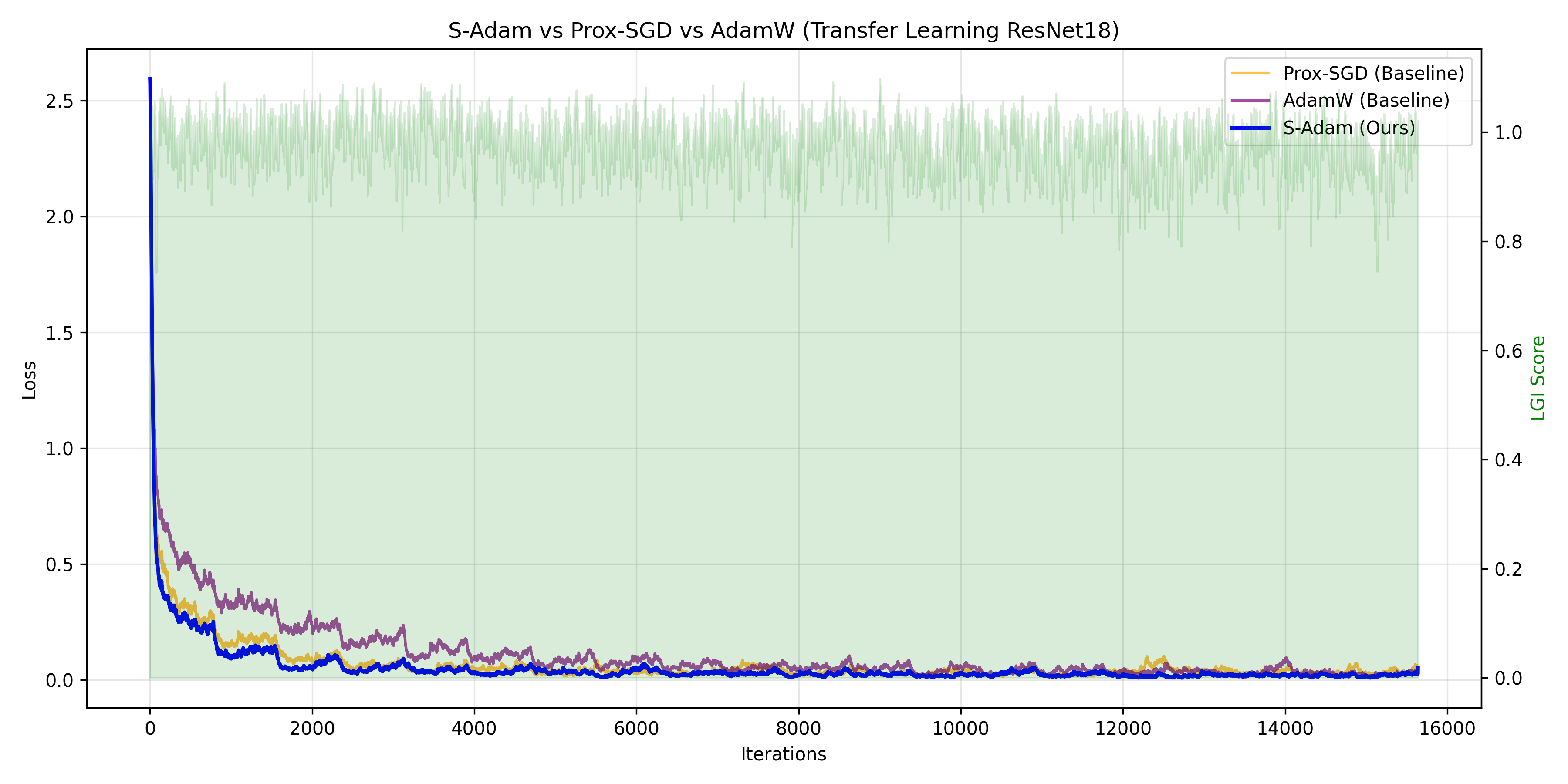}
        \caption{CIFAR10, Batch 64}
    \end{subfigure}
    \hfill
    \begin{subfigure}[b]{0.3\textwidth}
        \centering
        \includegraphics[width=\textwidth]{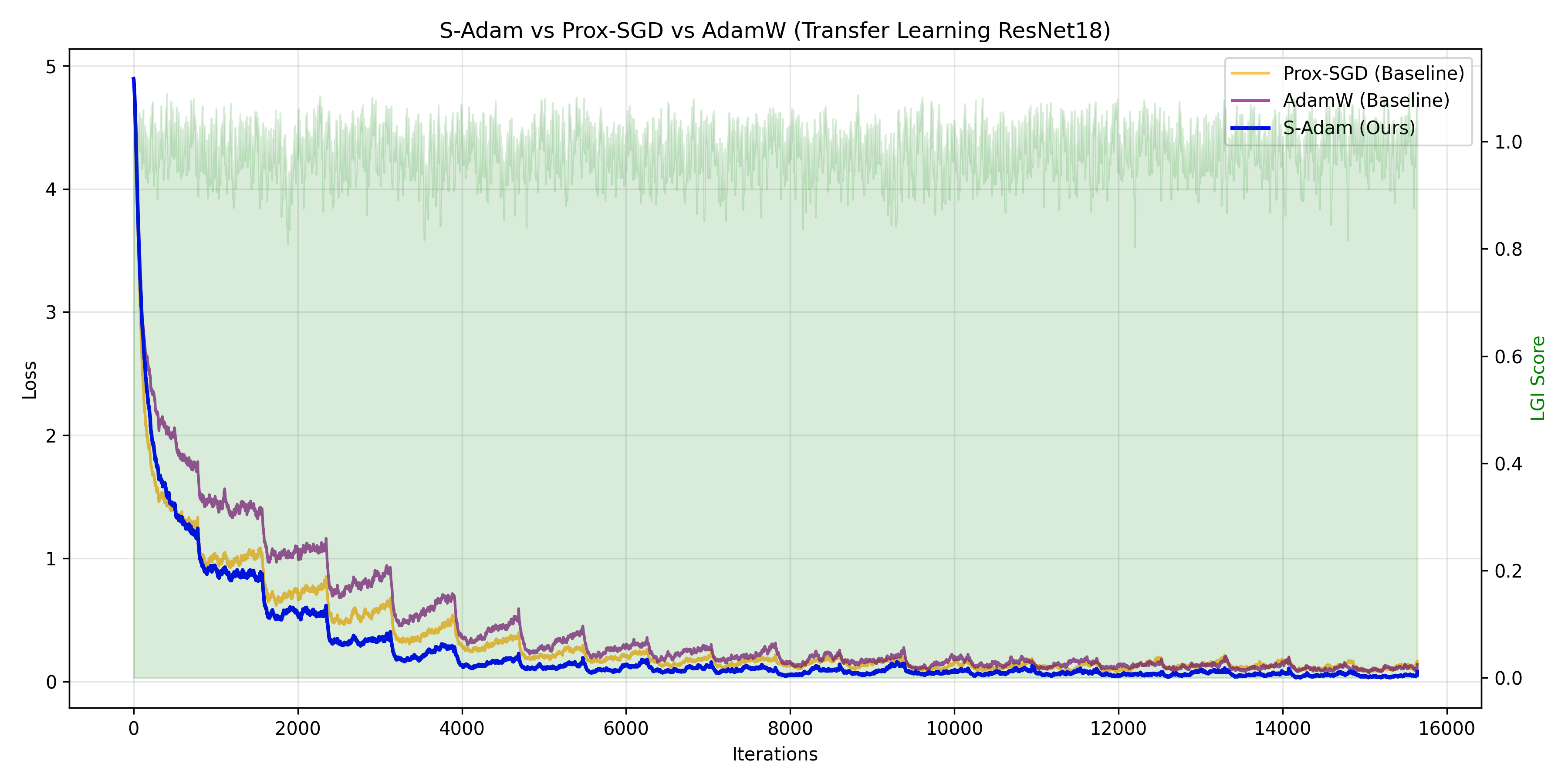}
        \caption{CIFAR100, Batch 64}
    \end{subfigure}
    \hfill
    \begin{subfigure}[b]{0.3\textwidth}
        \centering
        \includegraphics[width=\textwidth]{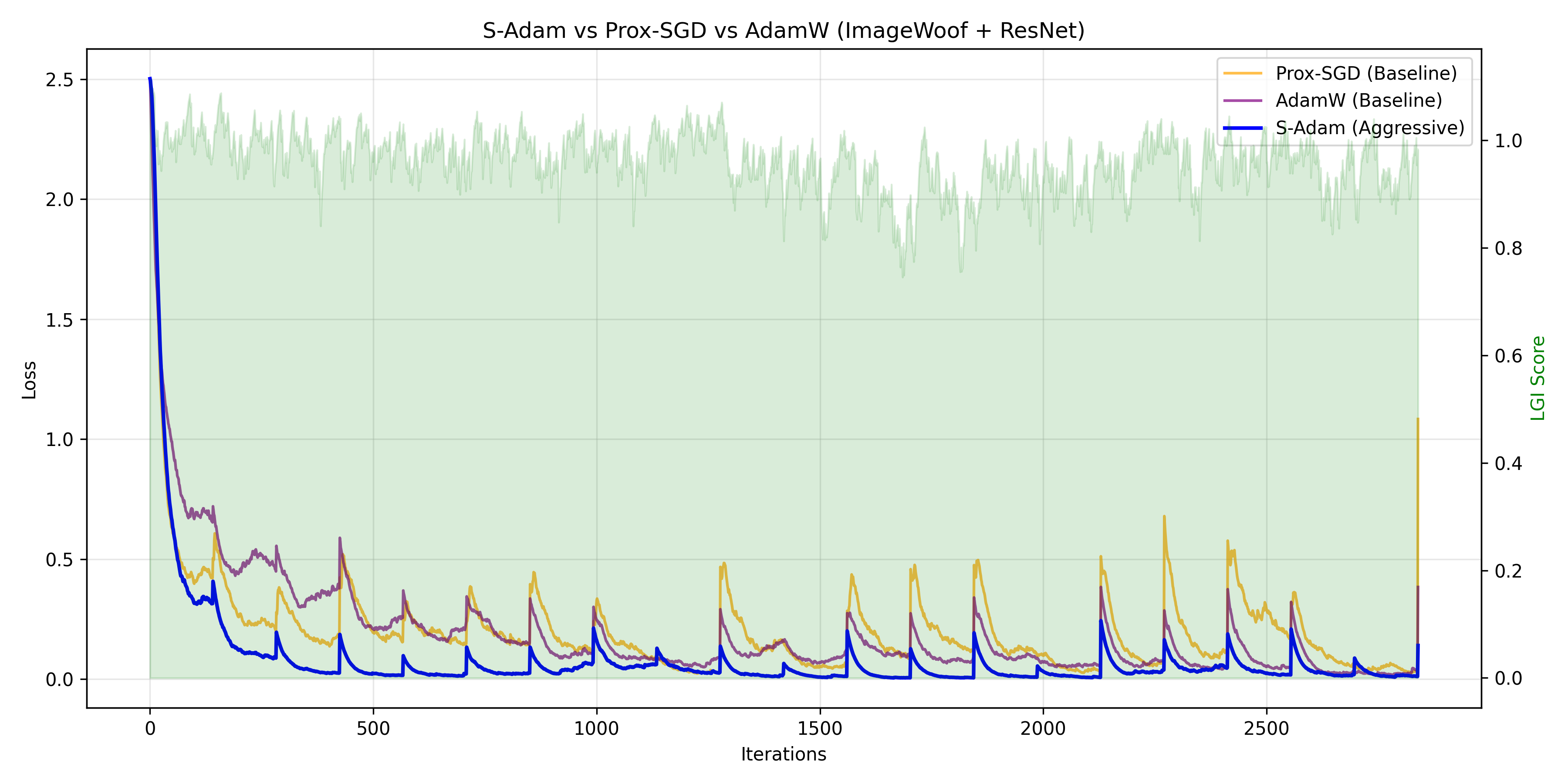}
        \caption{Imagewoof2-160, Batch 64}
    \end{subfigure}

    \caption{Loss curves for ResNet18 across CIFAR10, CIFAR100, and Imagewoof2-160 datasets with varying batch sizes (4, 16, and 64).}
    \label{fig:loss_curves_grid}
\end{figure}

%%%%%%%%%%%%%%%%%%%%%%%%%%%%%%%%%%%%%%%%%%%%%%%%%%%%%%%%%%%%%%%%%%%%%%%%%%%%%%%
%%%%%%%%%%%%%%%%%%%%%%%%%%%%%%%%%%%%%%%%%%%%%%%%%%%%%%%%%%%%%%%%%%%%%%%%%%%%%%%

\end{document}